\newtheorem{prop}{Proposition}
\newtheorem{corollary}{Corollary}
\title{
HG2P: Hippocampus-inspired High-reward Graph and Model-Free Q-Gradient Penalty for Path Planning and Motion Control
}
\author[1]{
  Haoran Wang
}
\author[ ,1]{
  Yaoru Sun\thanks{Corresponding author: Yaoru Sun.}
}
\author[1]{
  Zeshen~Tang
}
\author[2]{
  Haibo~Shi
 }
 \author[1]{
  Chenyuan~Jiao
 }
 \affil[1]{Department of Computer Science and Technology, Tongji University\protect\\ \texttt{\{1910664, yaoru, 2011610, 2432024\}@tongji.edu.cn}}
  \affil[2]{School of Statistics and Management, Shanghai University of Finance and Economics\protect\\ \texttt{shihaibo@sufe.edu.cn}}
\begin{document}

\maketitle

\begin{abstract}
Goal-conditioned hierarchical reinforcement learning (HRL) decomposes complex reaching tasks into a sequence of simple subgoal-conditioned tasks, showing significant promise for addressing long-horizon planning in large-scale environments. This paper bridges the goal-conditioned HRL based on graph-based planning to brain mechanisms, proposing a hippocampus-striatum-like dual-controller hypothesis. Inspired by the brain mechanisms of organisms (i.e., the high-reward preferences observed in hippocampal replay) and instance-based theory, we propose a high-return sampling strategy for constructing memory graphs, improving sample efficiency. Additionally, we derive a model-free lower-level Q-function gradient penalty to resolve the model dependency issues present in prior work, improving the generalization of Lipschitz constraints in applications. Finally, we integrate these two extensions, High-reward Graph and model-free Gradient Penalty (HG2P), into the state-of-the-art framework ACLG, proposing a novel goal-conditioned HRL framework, HG2P+ACLG\footnote{Code is available at \url{https://github.com/HaoranWang-TJ/HG2P_ACLG_official}.}. Experimentally, the results demonstrate that our method outperforms state-of-the-art goal-conditioned HRL algorithms on a variety of long-horizon navigation tasks and robotic manipulation tasks.

\end{abstract}

\section{Introduction}
\label{sec:introduction}
Goal-reaching and navigation tasks are common problems in many sequential decision-making and control tasks. For example, robotic assembly tasks can generally be decomposed into three subtasks: fetching an object, navigating to a specified position, and then putting it over there. Goal-conditioned hierarchical reinforcement learning (HRL) aims to solve this problem by breaking the final target into a series of subgoals. Recent successes with goal-conditioned HRL to complex and long-horizon tasks have been driven by well-designed subgoal-generating mechanisms, containing the $k$-step adjacency constraint \cite{zhang2020generating, zhang2022adjacency}, contrastive triplet loss-based subgoal representations \cite{li2020learning, li2021active, zhang2021world}, and graph-based planning \cite{huang2019mapping, kim2021landmark, lee2022dhrl, kim2023imitating}.
\added{Among these, motion planning, particularly graph-based approaches, plays a crucial role in enabling autonomous agents to accomplish missions swiftly and efficiently \cite{chai2024cooperative}.}
\added{For instance, Chai et al. \cite{chai2022design} proposed a hierarchical control framework to control a mobile robot, where an upper-level motion planning network generates optimized planned trajectories. At the lower level, an RL-based controller is developed to accomplish the waypoint tracking task.}
\added{Furthermore, by equipping the high-level planner with trajectory optimization methods that integrate constraints, including boundary conditions and collision-avoidance constraints, the hierarchical framework can be further enhanced for practical deployment in real-world scenarios, such as first-aid supply delivery \cite{chai2024two}, vehicle parking \cite{chai2022deep}, and overtaking in fail-safe situations \cite{chai2022multiphase}.}

Notably, recent studies in psychology and neuroscience reveal that integrating graph-based motion planning into model-free RL may potentially be associated with brain mechanisms in decision-making and memory \cite{gershman2017reinforcement, lin2018episodic}. For example, representations of place cells and grid cells in the rodent hippocampus can be viewed as the cognitive maps for spatial memory and navigation \cite{moser2015place, killian2018grid}. Inspired by the mechanisms of the brain, recent advances \cite{kang2023sample, zhu2019episodic} in episodic RL support the idea that experience should be stored in the form of the graph structure for more efficient recall and reconstruction, imitating the netlike connected engrams in the human brain. In the memory graph, each node corresponds to a visited state, each edge corresponds to an action, with the state-action value attributed to the edge. Under near-deterministic settings, researchers \cite{kang2023sample} focused on updating the edge attributes along the valuable paths to reinforce memory, thereby improving sample efficiency. Recent neuroimaging findings \cite{singer2009rewarded, ambrose2016reverse, murty2017selectivity, michon2019post, michon2021single, elliott2020neural} have demonstrated the underlying rationale for selective memory enhancement. Specifically, hippocampal replay activity, particularly the reverse replay during rest, is biased toward highly rewarded places, where high reward value enhances memory retention by selectively enhancing the reactivation and accelerating reorganization of the hippocampal spatial map. Additionally, novel experiences are also known to induce and increase replay activity, enhancing the acquisition and consolidation of new information. The effects of novelty or fear are tied to emotional (up or down) states, modulated by the noradrenergic system \cite{takeuchi2016locus}.

\begin{figure*}[htbp]
\centering
\includegraphics[width=0.88\textwidth]{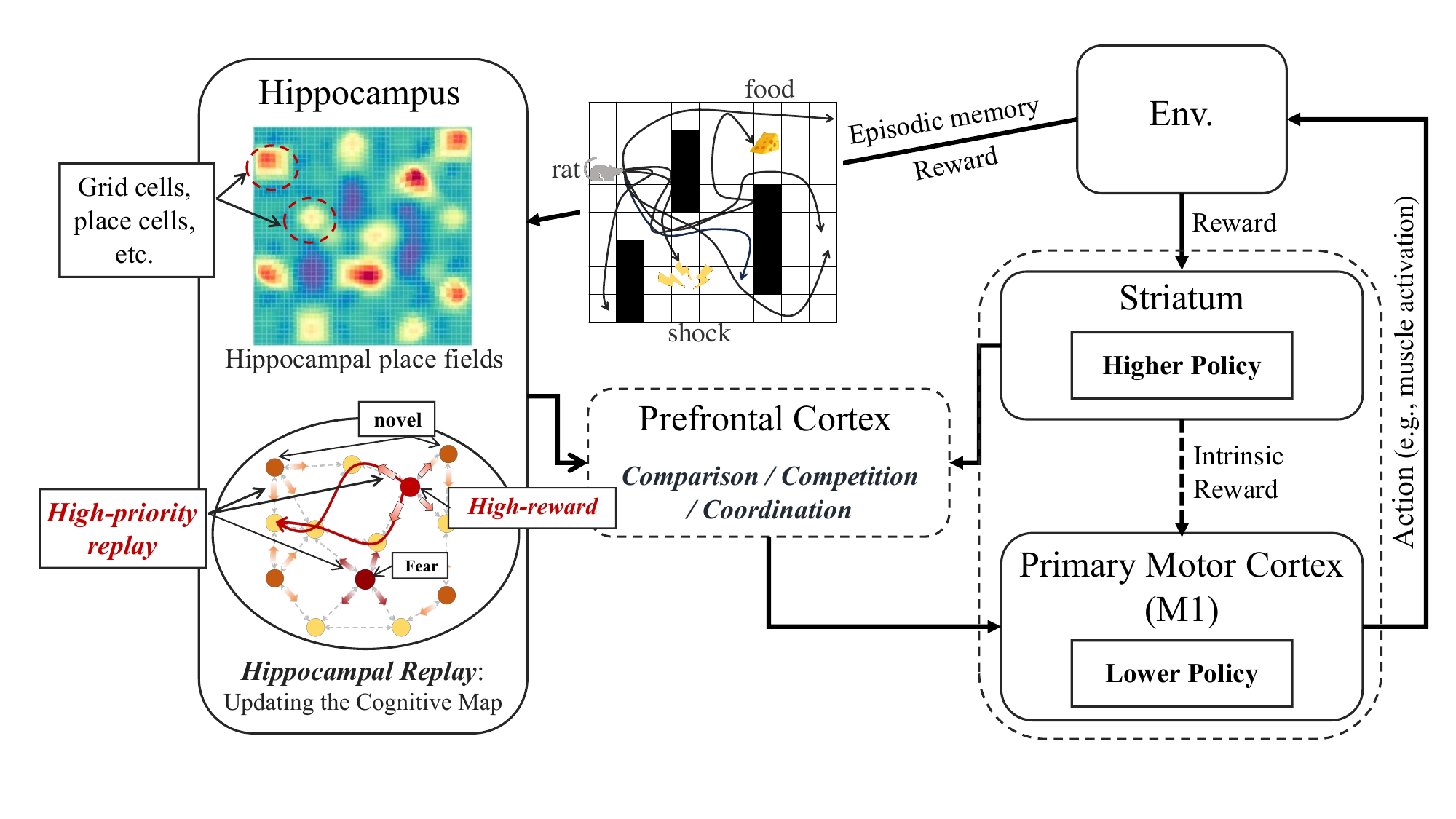}
\caption{Schematic of the proposed hippocampus-striatum-like dual-controller hypothesis. \replaced{Chersi et al. \cite{chersi2015cognitive} proposed a minimal cognitive architecture for spatial navigation, comprising two principal mechanisms: (i) the hippocampus, which encodes environmental locations to support goal-directed decision making; and (ii) the striatum, which learns stimulus-response associations. Here, we further elaborate on the functional roles of the hippocampus and striatum modules by incorporating hippocampal replay and the involvement of the primary motor cortex (M1).}{The hypothesis builds upon the minimal cognitive architecture for spatial navigation proposed by Chersi et al \cite{chersi2015cognitive}.  We expand this architecture by detailing the roles of the hippocampus and striatum modules: (a) The hippocampus models place fields, similar to landmarks, based on place cells and grid cells. During training, hippocampal replay, particularly the reverse replay, prioritizes memories associated with high rewards \cite{singer2009rewarded, ambrose2016reverse, murty2017selectivity, michon2019post, michon2021single, elliott2020neural}, strong emotional responses, and novel experiences \cite{takeuchi2016locus}; (b) The striatum functions as the higher-level policy in HRL. Furthermore, we detail the output pathway from the prefrontal cortex: motor signals are modulated by the primary motor cortex (M1), functioning like the lower-level policy in HRL, and are then transmitted along the spinal cord to the end effectors for motor control.}}
\label{hippocampus-striatum-like}
\end{figure*}
Based on existing neurobiological studies and findings, we can effectively link the existing goal-conditioned HRL with graph-based planning to brain mechanisms, proposing a dual-controller hypothesis. Indeed, Chersi et al. \cite{chersi2015cognitive} were the first to propose a minimal cognitive architecture for spatial navigation, a minimal circuit composed of the hippocampus and striatum. Additionally, recent studies \cite{lin2018episodic, zhu2019episodic} have argued that two particular controllers should be integrated: one associated with the hippocampus, which provides the cognitive map for goal-directed decision-making, and the other associated with the ventral striatum, which learns stimulus-response associations. Inspired by this \cite{chersi2015cognitive}, our dual-controller hypothesis based on the hippocampus-striatum system is illustrated in Fig.~\ref{hippocampus-striatum-like}.
\added{The hypothesis builds upon the minimal cognitive architecture for spatial navigation proposed by Chersi et al. \cite{chersi2015cognitive}.  We expand this architecture by detailing the roles of the hippocampus and striatum modules: (a) The hippocampus models place fields, similar to landmarks, based on place cells and grid cells. During training, hippocampal replay, particularly the reverse replay, prioritizes memories associated with high rewards \cite{singer2009rewarded, ambrose2016reverse, murty2017selectivity, michon2019post, michon2021single, elliott2020neural}, strong emotional responses, and novel experiences \cite{takeuchi2016locus}; (b) The striatum functions as the higher-level policy in HRL. Furthermore, we detail the output pathway from the prefrontal cortex: motor signals are modulated by the primary motor cortex (M1), functioning like the lower-level policy in HRL, and are then transmitted along the spinal cord to the end effectors for motor control.}

\begin{figure*}[h]
\centering
\includegraphics[width=0.99\textwidth]{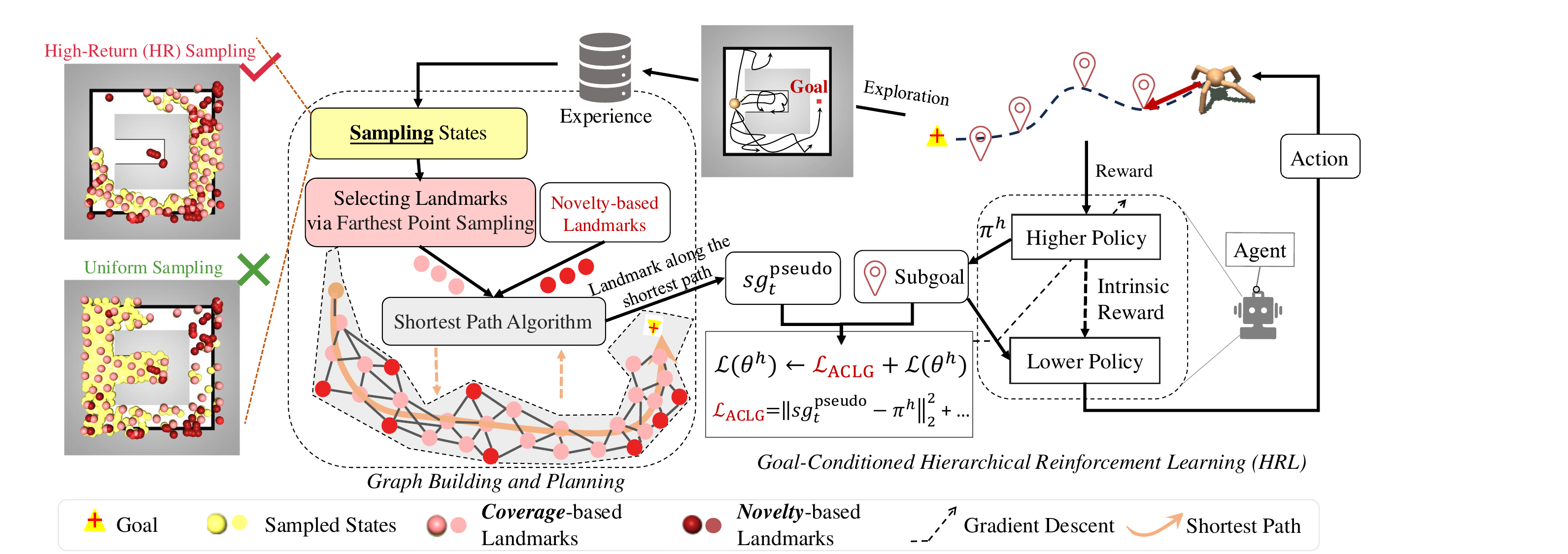}
\caption{\added{Overview of the proposed framework.} \added{The \textbf{right} part} illustrates the architecture of goal-conditioned hierarchical RL, where the higher-level policy observes the environment and generates a high-level action (i.e., subgoal), while the lower-level policy works to reach the assigned subgoal. \added{The \textbf{left} part details the graph‑building and planning process: coverage‑ and novelty‑based landmarks are selected to construct a topological map, from which the most urgent landmark is chosen as the next desired subgoal.} \added{The \textbf{left-most} part} visually compares our high-return (HR) sampling with the commonly used uniform sampling in landmark-based planning. Remarkably, our graph contracts rapidly upon encountering high-value regions, similar to \added{the behavior observed in} the \textit{slime mold maze} experiment \cite{nakagaki2000maze}. \added{The maze environment depicted in this diagram is discussed in detail in \ref{embossed_u_maze}.}}
\label{fig:maze}
\end{figure*}
\replaced{Based on this hypothesis, we instantiate the hippocampus-striatum-inspired dual-controller as a novel goal-conditioned hierarchical RL framework, equipped with a high-level, dynamically constructed landmark-based map, as illustrated in Fig.~\ref{fig:maze}. W}{Based on such a hypothesis, w}e argue that the following studies are biologically plausible for accelerated learning: HIGL \cite{kim2021landmark} proposed novelty-based sampling; ACLG \cite{wang2024guided} decoupled the loss terms of HIGL to enable direct competition between the graph-based (hippocampus-like) controller and the model-free (striatum-like) controller. However, previous studies on graph-based planning neglect \textit{the high-reward preferences observed in hippocampal replay}. To address this, we propose adopting a high-return sampling strategy to replace uniform sampling when constructing memory graphs, as shown in \added{the left-most part of} Fig.~\ref{fig:maze}. The high-return sampling strategy will assign larger weights to high-return experiences. This is also inspired in part by instance-based decision theory, i.e., the performance of the target policy is anchored on the performance of the behavior policy. Therefore, learning from high-performing policies’ trajectories holds the potential to achieve greater policy improvement over the behavior policy.

Meanwhile, we note that Wang et al. \cite{wang2024guided} derived a model-inferred upper bound of Q-function's Lipschitz constant w.r.t. actions, aiming at suppressing sharp lower-level Q-function gradients. The underlying reason is that the lower-level policy interacts directly with the environment, and reinforcing its robustness can prevent catastrophic failures. Also, Lipschitz's continuous constraints of lower-level Q-function make the goal-reaching distance estimation based on value function \cite{huang2019mapping} more reliable, avoiding overestimation for out-of-distribution (OOD) states. Nevertheless, approximating the environment dynamics to derive the upper bound is particularly costly and impractical for complex dynamical environments with high-dimensional observations. In this paper, we strategically propose a \textit{model-free} Q-gradient penalty by deriving the upper bound of the lower-level Q-function's Lipschitz constant \textit{w.r.t. input states and subgoals}. Interestingly, this upper bound is associated with the higher-level policy, promoting inter-level coordination. 

All in all, in this paper, we propose a novel goal-conditioned HRL framework, which primarily introduces two crucial extensions to existing frameworks: 1) high-return sampling for graph construction, simulating the high-reward preferences observed in hippocampal replay; 2) a model-free Q-gradient penalty, providing an alternative approach to model-based penalty \cite{wang2024guided}. The first extension relates to the Hippocampus-inspired \textbf{H}igh-reward \textbf{G}raph, and the second to model-free \textbf{G}radient \textbf{P}enalty, so the proposed method is called \textbf{HG2P}.
We integrate these extensions into a state-of-the-art framework, ACLG \cite{wang2024guided}, and evaluate the effectiveness of the novel framework on a set of complex and long-horizon tasks, including both maze-related navigation tasks and robot manipulation tasks. The experimental results demonstrate that the proposed framework contributes to improved training stability and higher sample efficiency. The main contributions of this work are summarized as follows:
\begin{itemize}
    \item It is the first attempt to interpret landmark-based planning through the hippocampus-striatum-like dual-controller hypothesis.
    \item Inspired by such hypothesis and instance-based theory, we propose a high-return (HR) sampling strategy that emphasizes the importance of high-return episodic memory in constructing memory graphs, thereby enhancing sample efficiency.
    \item To eliminate dependence on models, we propose a model-free Q-gradient penalty (MF-GP) module that serves as an alternative to the model-based Q-gradient penalty used in prior work \cite{wang2024guided}.
\end{itemize}

The remaining part of this paper is organized as follows:
Section~\ref{sec:preliminaries} introduces the preliminaries, including HRL, adjacency constraint, landmark-based planning, and the prior state-of-the-art approaches, i.e., HIGL and its variant ACLG. Section~\ref{sec:rw} briefly introduces the related works, focusing on sampling in episodic memory and enforcing local Lipschitzness. Section~\ref{sec:method} describes two crucial components of the proposed framework, providing a detailed explanation and formulation. In Section~\ref{sec:experiments}, we describe the experimental environments, conduct ablation studies to explore the impact of different parameters, present the main results, and discuss the principal findings. Section~\ref{sec:cfw} summarizes the concluding remarks of this article, while Section~\ref{sec:limit_cfw} discusses several limitations and outlines potential directions for future research.

\section{Preliminaries}
\label{sec:preliminaries}

We formulate a continuous control task with a finite-horizon, goal-conditioned Markov decision process (MDP) defined as a tuple $\left(\mathcal{S}, \mathcal{G}, \mathcal{A}, \mathcal{P}, \mathcal{R}, \gamma \right)$, where $\mathcal{S}$ is the state space, $\mathcal{G}$ is the goal space, $\mathcal{A}$ denotes the action space, $\mathcal{P}: \mathcal{S} \times \mathcal{A} \rightarrow \mathcal{S}$ is the transition function defining the transition dynamics of environment, $\mathcal{R}: \mathcal{S} \times \mathcal{A} \times \mathcal{G} \rightarrow \mathbb{R}$ is the reward function, and $\gamma \in \left[0, 1\right)$ is a discount factor. To be specific, when the environment  takes an action $a_t \in \mathcal{A}$, it will transition from $s_t \in \mathcal{S}$ to a new state $s_{t+1} \in \mathcal{S}$ while yielding a reward $R_t \in \mathcal{R}$, where $s_{t+1} \sim \mathcal{P}\left(s_{t+1}|s_t, a_{t}\right)$ and $R_t$ is conditioned on a goal $g \in \mathcal{G}$. In many real-world scenarios, goal-conditioned control tasks pose a significant challenge for general reinforcement learning, particularly in long-horizon tasks where the goal is far away. Following prior work \cite{wang2024guided}, we presented a goal-conditioned HRL framework to deal with these challenging long-horizon tasks. In our goal-conditioned HRL setup, the hierarchical framework typically has two layers: higher- and lower-level policies. The higher-level policy observes the current state $s_t$ of the environment and produces a high-level action, i.e., a subgoal $sg_t \in \mathcal{G}$, every $h$ steps. The subgoal indicates a desired change of states, such as a relative or absolute change of the location. Meanwhile, the lower-level policy attempts to reach the assigned subgoal within a $h-$step interval. We assume that the higher- and lower-level policies are parameterized by $\theta^h$ and $\theta^l$, respectively. The procedure mentioned above can be formulated as follows: $sg_t = a^h_t \sim \pi_{\theta^h} (s_t, g)$ when $t \equiv 0$ (mod $h$); $a^l_t \sim \pi_{\theta^l} (s_t, sg_t)$. Here, the higher-level action $a^h_t \in \mathcal{G}$ represents a subgoal, which has a different meaning from the low-level atomic actions $a^l_t \in \mathcal{A}$. When $t \not\equiv 0$ (mod $h$) and the subgoal indicates an absolute location, $sg_{t+1} = sg_t$; otherwise, the subgoal will automatically evolve following a pre-defined subgoal transition function: $sg_t = H(s_{t-1}, sg_{t-1}, s_t)$. Generally, the subgoal transition function is defined in the goal space. Here, we first introduce a known mapping function that transforms a state into the goal space: $\varphi: \mathcal{S} \rightarrow \mathcal{G}$. Then, the subgoal transition function can be expressed as:  $sg_t = H(s_{t-1}, sg_{t-1}, s_t)=sg_{t-1}+\varphi(s_{t} - s_{t-1})$. The self-transition of subgoals enables the lower-level agent to behave like an autonomous dynamical system within the $h-$step interval.

Furthermore, during interaction with the environment, the higher-level agent is motivated by the external rewards, i.e., receiving the entire feedback by accumulating all environmental rewards within $h$-step horizon: $r^h_t=\sum_{i=t}^{t+h-1} R_i$; while the lower-level agent is intrinsically motivated by the internal reward: $r^l_t=-\lVert sg_{t+1} - \eta \varphi(s_{t+1}) \rVert_2$, where $\eta$ denotes a Boolean hyperparameter whose value is 0/1 for the relative/absolute subgoal scheme. The higher- and lower-level agents aim to maximize the expected discounted rewards $\mathbb{E}_{L\in \left \{ l, h\right \} } \left[ \sum^{\infty}_{t=0} \gamma^t r^L_t\right]$. To simplify notation, hereinafter, we use superscripts $L\in \left \{ l, h\right \}$ to distinguish the aforementioned variables involved in the higher-level and lower-level design. For example, $a^l$ and $a^h$ indicate the lower-level and higher-level actions, respectively. In hierarchical reinforcement learning, the objective is to find the optimal behavior policies $\pi^L$, with parameters $\theta^L$, which maximize such discounted rewards. In standard actor-critic
methods, the parametrized policies $\pi^L$ can be updated through the deterministic policy gradient algorithm:
\begin{equation}
\mathcal{L}(\theta^L) = \mathbb{E}_{o^L \sim \mathcal{D}_{L}} \left[\nabla_{a^L}Q \left(o^L, a^L\right)\Big|_{a^L=\pi^L(o^L)} \nabla_{\theta^L}\pi^L \right]
\end{equation}
Here, we use $o^L$ instead of $s^L$, because at the lower-level, the observation includes the state $s_t$ and the subgoal $sg_t$, i.e., $o^l_t=\langle s_t, sg_t \rangle$; while at the higher-level, it includes $s_t$ and the goal $g$, i.e., $o^h_t=\langle s_t, g \rangle$. Q$ \left(o^L_t, a^L_t\right)=\mathbb{E}_{o^L_i \sim \mathcal{P}, a^L_i \sim \pi^L} \left[ \sum^{\infty}_{i=t} \gamma^i r^L_i \big| o^L_t, a^L_t\right]$, is known as the value function, with estimating the expected return when performing action $a^L_t$ and the subsequent actions induced by $\pi^L$. Generally, the value function can be approximated with a
differentiable function approximator, with parameters $\phi^L$, and can be learned using temporal difference learning:
\begin{equation}
\mathcal{L}(\phi^L) = \mathbb{E}_{(o^L_t, a^L_t, r^L_t, o^L_{t+1}) \sim \mathcal{D}_{L}} \left[ \Big( Q \left(o^L_t, a^L_t\right)- \left(r^L_t + \gamma Q \left(o^L_{t+1}, \pi^L(o^L_{t+1})\right)\right) \Big)^2\right]
\end{equation}
In practice, we utilize the Twin Delayed DDPG (TD3) \cite{fujimoto2018addressing} algorithm, a variant of the popular DDPG \cite{timothy2016continuous} algorithm, for instantiating the lower- and higher-level agents. The TD3 can substantially improve stability and performance by introducing three critical tricks. For more details, please refer to the literature \cite{fujimoto2018addressing}.

\subsection{Adjacency Constraint}
Given the hierarchical framework described above, innumerable subgoals can induce behavioral policies to achieve the same rollout. Learning such a large subgoal representation space poses difficulties for training efficiency. Zhang et al. \cite{zhang2020generating, zhang2022adjacency} restricted the high-level action space, i.e., the subgoal space, to a $k$-step adjacent region. The work maintained a binary $k$-step adjacency matrix to memorize whether two states are $k$-step adjacent. To address the non-differentiable and generalization issues, in practice, the authors parameterized the $k$-step adjacency region with a neural network. Let $\psi$ be an adjacency network parameterized by $\Phi$. Subsequently, the adjacency information stored in $k$-step adjacency matrix can be further distilled into the adjacency network by minimizing the following contrastive-like loss: $\mathcal{L}_{{\rm adj}}(\Phi)=\mathbb{E}_{s_i,s_j\in\mathcal{S}}[l\cdot \max(||\psi_{\Phi}(\varphi(s_i))-\psi_{\Phi}(\varphi(s_j))||_2-\zeta_k, 0)+(1-l)\cdot \max(\zeta_k+\delta_{\rm adj}-||\psi_{\Phi}(\varphi(s_i))-\psi_{\Phi}(\varphi(s_j))||_2, 0)]$, where $\delta_{\rm adj} > 0 $ is a hyperparameter indicating a margin between embeddings, $\zeta_k$ is a scaling factor, and $l \in \{0,1\}$ is the label indicating whether $s_i$ and $s_j$ are $k$-step adjacent. When in use, the adjacency network can measure whether two states are $k$-step adjacent using the Euclidean distance. Specifically, the $k$-step adjacent estimation of two states $s_i$ and $s_j$ can be calculated as: $\max(||\psi_{\Phi}(\varphi(s_i))-\psi_{\Phi}(\varphi(s_j))||_2 - \zeta_k, 0)$, which means that the adjacency network will output a non-zero value when $||\psi_{\Phi}(\varphi(s_i))-\psi_{\Phi}(\varphi(s_j))||_2 > \zeta_k$, indicating non-adjacency.

\subsection{Landmark-Guided Planning}\label{Landmark_Sampling}
Recently, graph-based planning algorithms have been the preferred tool for endowing goal-based RL agents with the ability to navigate over complex long-horizon problems. Classical graph-based planning commonly contains two parts: (a) sampling landmarks from the visited state space and (b) building a graph planner to select waypoints.

\paragraph{(a) Landmark Sampling}
In order to build a landmark-based map to abstract the state space, a pool of states is typically sampled \textit{uniformly} from the replay buffer to cover the visited state space. However, planning over the directly sampled subset will impose a significant computational burden. Furthermore, researchers proposed to deploy the farthest point sampling (FPS) algorithm for further sparsification\cite{huang2019mapping}. These resulting sparse states are named \textit{landmarks}, a concept borrowed from navigation literature.

\paragraph{(b) Graph Building and Planning}
After obtaining a collection of landmarks, denoted as $\mathcal{S}_{LM}$, we turn our attention to building a directed weighted graph for high-level planning. To build the graph, we first connect any pair of landmarks and assign an estimated distance between them as the weight of the connecting edge. Following prior works\cite{huang2019mapping, kim2021landmark, wang2024guided}, the edge weights are formed by the UVFA-estimated distance, approximated as: $w_e(s_i,s_j) \leftarrow -\min_aQ(s_i, \varphi(s_j - \eta s_i),a; \phi^l)$, i.e., $-Q(s_i, \varphi(s_j - \eta s_i), \pi(s_i, \varphi(s_j - \eta s_i);\theta^l); \phi^l)$, $\forall s_i, s_j \in \mathcal{S}_{LM}$. As indicated, the distance of state-goal pairs is estimated by the lower-level value function. Since the distance estimation is inaccurate for pairs of states that are far apart, connections with edge weights exceeding a preset threshold will be removed. To this end, the shortest path search algorithm is run to choose the next subgoal, guiding the behavioral policy to reach the final goal. Specifically, the selected next landmark, the very first landmark in the shortest path from the current state $s_t$ to the final goal $g$, can be found by: $sg^{\text{plan}}_t = \arg\min_{\varphi(s_i)}[w_e(s_t, s_i) + w_e(s_i, g)]$, $\forall s_i \in \mathcal{S}_{LM}$.

\subsection{HIGL\cite{kim2021landmark} and its variant ACLG\cite{wang2024guided}}
Additionally, recent works have integrated the adjacency constraint with landmark planning to ensure both reachability and efficient exploration. Notably, Kim et al. \cite{kim2021landmark} proposed HIGL, which further improves exploration and decision-making by introducing two critical tricks: (1) the novelty-based sampling scheme that stores newly encountered states as additional landmarks; (2) restricting the action space of a high-level policy to a $k$-step adjacent region centered at a selected landmark. To be specific, in addition to randomly sampling from the replay buffer, HIGL also sampled novel states by calculating RND-based novelty of states \cite{burda2019exploration}, i.e., \textit{novelty-based landmarks} $\mathcal{S}_{LM}^{\rm nov}$. To differentiate, we refer to the previously mentioned uniformly sampled landmarks as \textit{coverage-based landmarks} $\mathcal{S}_{LM}^{\rm cov}$. Thus, the final set of landmarks is given by $\mathcal{S}_{LM} = \mathcal{S}_{LM}^{\rm cov} \cup \mathcal{S}_{LM}^{\rm nov}$. Then, after performing the graph search on such a landmark-based graph, the most urgent landmark was found and shifted towards the current state for reachability. The new pseudo-landmark is defined as follows: $sg^{\text{pseudo}}_t = sg^{\text{plan}}_t + \delta_{\text{pseudo}}\cdot\frac{sg^{\text{plan}}_t - \varphi(s_t)}{||sg^{\text{plan}}_t - \varphi(s_t)||_2}$, where $\delta_{\text{pseudo}}$ is the shift magnitude. Finally, HIGL incorporated the adjacency constraint and the landmark planning into the goal-conditioned HRL framework by introducing the following loss:
\begin{equation}
\mathcal{L}_{\rm HIGL}(\theta^h) = \lambda^{\rm HIGL}_{\rm LM} \cdot \max(||\psi_{\Phi}(sg_{t}^{\rm pseudo}) -\psi_{\Phi}(\pi(s_t, g;\theta^h))||_2 - \zeta_k, 0)
\end{equation}
\replaced{w}{W}here $\lambda^{\rm HIGL}_{\rm LM} \in \mathbb{R}^+$ is the balancing coefficient controlling the effect of the loss term $\mathcal{L}_{\rm HIGL}(\theta^h)$.

Although HIGL is highly efficient for solving goal-conditioned HRL tasks, the performance is not satisfactory because the entanglement between the adjacency constraint and landmark-based planning was not well balanced.
In a recent study\cite{wang2024guided}, researchers proposed a new integrated method, ACLG, by disentangling the two components. ACLG only made minor modifications to the aforementioned loss term of HIGL:
\begin{equation}
\begin{aligned}
\mathcal{L}_{\rm ACLG}&(\theta^h) \\= & \lambda_{\rm adj} \cdot \max(||\psi_{\Phi}(\varphi(s_t))-\psi_{\Phi}(\pi(s_t, g;\theta^h))||_{2} - \zeta_k, 0) \\ \qquad &+ \lambda^{\rm ACLG}_{\rm LM} \cdot ||sg_{t}^{\rm pseudo} - \pi(s_t, g;\theta^h) ||^2_2
\label{aclg_loss}
\end{aligned}
\end{equation}
\replaced{w}{W}here the hyperparameters $\lambda_{\rm adj} \in \mathbb{R}^+$ and $\lambda^{\rm ACLG}_{\rm LM} \in \mathbb{R}^+$ are introduced to better balance the adjacency constraint and landmark planning.

\section{Related work}
\label{sec:rw}
\subsection{Episodic Memory and Sampling in Reinforcement Learning}
For most offline RL algorithms, experience replay has been the fundamental component, where previously collected experiences are reused to improve sample efficiency. Naturally, the construction and utilization (sampling) strategies of the replay buffer are critical factors influencing sample efficiency. Model-based RL methods learn dynamics models directly from historical data and then predict upcoming states. However, in order to learn such dynamics models, model-based RL methods often take orders of magnitudes more data. From the perspective of improving sampling efficiency, episodic RL methods propose to use high-return episodic memory. Specifically, in a near-deterministic environment, an agent can use episodic memory to record experiences and later imitate sequences of actions that yield high rewards. The episodic strategy is in part inspired by the human brain, where the hippocampus and related medial temporal lobe structures play an integral role in episodic memory encoding and processing. Previous works on episodic RL used tabular-like representation to memorize past experience and retrieved the highly-rewarding sequences through context-based lookup \cite{blundell2016model, pritzel2017neural}. Furthermore, recent theoretical work \cite{lin2018episodic} argued that both model-free and model-based control systems should be considered together for better sample efficiency. Because, in the working mechanism of the human brain, two particular controllers complement each other: one associated with the hippocampus and model-based learning, and the other with the ventral striatum and dopamine. Therefore, in related literature \cite{lin2018episodic}, both model-free RL and episodic RL methods were leveraged to provide an inference target and a memory target for the agent. However, these studies merely utilized a tabular-like memory to store original data or episodic trajectories without considering the association between past experiences. Several extensions \cite{zhu2019episodic, kang2023sample} have been proposed to fully exploit the information in episodic memory by reconstructing memory graphs. The concept of associative memory was introduced in the form of a directed graph, where each visited state was encoded as a node attributed with a state-action value \cite{zhu2019episodic}. Since the same states may appear in multiple trajectories, these nodes will contribute to connecting different trajectories and facilitating inter-episode value propagation. In the end, similar to previous work \cite{lin2018episodic}, researchers used the graph-based state-action value as a memory target for the agent \cite{zhu2019episodic}. Another work introduced a similar concept termed "engram"\cite{kang2023sample}. The framework abstracted key nodes from the memory graph, utilized a shortest-path search algorithm to get valuable paths, and prioritized updating these nodes within the reconstructed paths. Our work differs from episodic RL methods, which often succeed under the strong assumption of a near-deterministic environment with a fixed goal. In contrast, our objective is to address more complex tasks that involve varying goals.

Another related line of work is about trajectory-wise sampling. From the perspective of instance-based decision theory, the performance of the target policy is anchored on the performance of the behavior policy that collects interaction data. Therefore, anchoring in a high-performing dataset favors the performance of the target policy; otherwise, it may hinder the performance. Imitation learning methods, such as decision transformers \cite{chen2021decision, furuta2021generalized}, tend to perform behavior cloning on only trajectories with higher cumulative rewards, ordered by episode returns. However, these studies use a simplistic approach by discarding low-return episodic experiences. The most closely related prior work to our method is that of Hong et al. \cite{hong2022harnessing}, which employs trajectory-wise reweighting to fully exploit rare high-performing data. However, from an application perspective, Hong et al. \cite{hong2022harnessing} use weighted sampling strategies to update policies, whereas our work focuses on identifying key states through high-return sampling to reconstruct the memory graph.

\subsection{Local Lipschitzness for Robust Reinforcement Learning}
In RL, the data-based learning approach is notorious for not guaranteeing stability, making the control system useless and potentially dangerous. In fact, in many real-world problems, both the state and action spaces are continuous. Naturally, the Lipschitz assumptions hold. This means that when similar actions follow similar states, their effects will be similar. Thus, numerous studies have focused on the smoothness of the policy and value functions by introducing Lipschitz constraints. For policy smoothness, researchers \cite{thodoroff2018temporal, mysore2021regularizing} proposed temporal and spatial smoothness regularization terms to penalize policies when actions taken on the next state were significantly dissimilar from actions taken on the current state. Besides, related work \cite{pirotta2015policy} has studied how to automatically set a proper step size along the gradient direction for ensuring Lipschitz w.r.t. policy parameters. For the value function, one line of investigation focuses on Lipschitz continuity of the value function to obtain regret upper bounds \cite{osband2014model, ni2019learning} or upper bounds of action-value function's Lipschitz constant w.r.t. different components \cite{blonde2022lipschitzness}. However, as discussed in a previous study \cite{kobayashi2022l2c2}, overly strong Lipschitz constraints can cause the policy to lose its state-dependent adaptive behaviors. A more recent line of work focuses on the local Lipschitz continuous constraint \cite{kobayashi2022l2c2, blonde2022lipschitzness, gao2022robust, wang2024guided}. The most closely related prior work to our method is by Wang et al. \cite{wang2024guided}, which derived a model-inferred upper bound of Q-function’s Lipschitz constant w.r.t. input actions. This approach is particularly costly and often impractical in complex environments due to the necessity of large-scale data and the difficulty in approximating complex dynamics. Our work aims to estimate the upper bound w.r.t. input states, freeing itself from dependence on models.

\section{Methods}
\label{sec:method}
The proposed HG2P framework involves two critical components: 1) high-return (HR) sampling for graph construction and 2) the model-free Q-gradient penalty (MF-GP). Below, we provide details on the implementation and formulation of these two components.
\subsection{High-Return (HR) Sampling for Graph Construction}
In most complex and long-horizon tasks, the collected replay buffer might consist mostly of low-return trajectories with few high-return trajectories, since solving these tasks from scratch requires going through many iterations of trial and error, with few successes. Such challenging distributions in the replay buffer make the uniform sampling fail to exploit the rare but high-performing trajectories to their fullest. Therefore, among the existing goal-conditioned HRL frameworks, graph-based planning algorithms are susceptible to unbalanced distribution due to the dependency on uniform sampling. The recent study by Kim et al. \cite{kim2021landmark} utilized the random network distillation \cite{burda2019exploration} to explicitly sample the novel states, named novelty-based sampling. However, it primarily emphasizes the importance of rarely visited states, rather than the rare but high-return states that are crucial for success. To address this issue, we propose to re-weight the transitions in the replay buffer, assigning larger weights to high-return trajectories. The weighted sampling strategy is expected to make the replay buffer, even predominantly composed of low-return trajectories, perform like an expert dataset.

\added{\textit{Motivation}: From an instance‐based theoretical perspective, the target policy is derived by imitating demonstrations generated by several past near‐optimal policies. In the context of a hierarchical reinforcement learning architecture, sampling and then mimicking subgoals (conceptualized as abstract anchors) from high-return trajectories hold significant potential for further policy improvement. We provide a detailed, step-by-step derivation of the rationale behind subgoal sampling from high-return trajectories in \ref{motivation_sampling}, "\nameref{motivation_sampling}".} Therefore, given that a set of trajectories will be collected and stored in a replay buffer, with each trajectory $\tau^{(i)}=\{s^{(i)}_{0}, s^{(i)}_{1}, s^{(i)}_{2}, \dots\}$ being collected by different behavior policy $\pi^{(i)}$. Under the assumption of an optimal lower-level policy and a deterministic MDP with Dirac delta distributions for both the initial state and final goal, we find the target higher-level policy by imitating demonstrations generated from several past near-optimal policies:
\begin{subequations}
\begin{align}
\log \pi^{(*)}&(sg_{t}|s_{t}, g; \theta^h) \varpropto \nonumber\\ &-\sum_{(i)}\sum^{\infty}_{t=0} w^{(i)}\big\Vert\underbrace{\big[\varphi(s^{(i)}_{t+1}) - \eta\varphi(s^{(i)}_{t})\big]}_{\text{demonstrations}} - \underbrace{\pi(s_{t}, g; \theta^h)}_{\text{policy}}\big\Vert_2 + {\rm const} \\
&\text{s.t.}\qquad \max_{\mathcal{W}} \sum_{(i)} w^{(i)}\sum^{\infty}_{t=0}\gamma^t R^{(i)}_{t}
\label{main_imitating_policies:b}
\end{align}
\label{main_imitating_policies}
\end{subequations}
\replaced{w}{W}here $w^{(i)}$ is the unnormalized weight assigned to the trajectory $\tau^{(i)}$, and $\mathcal{W} := \{ w^{(0)}, w^{(1)}, w^{(2)}, \dots \}$.

\added{\textit{Methodology}:}
As described in Equation \ref{main_imitating_policies}, \replaced{a well-defined weighting distribution of trajectory demonstrations, $\mathcal{W}$, is determined by the constraint shown in Equation \ref{main_imitating_policies:b}.}{we can find a well-defined weighting distribution $\mathcal{W}$ by solving:
\begin{equation}
\max_{\mathcal{W}} \sum_{(i)} w^{(i)} \sum^{\infty}_{t=0}\gamma^t R^{(i)}_{t}
\label{traj_weighting_cond}
\end{equation}}
Meanwhile\deleted{, according to Equation \ref{traj_weighting_cond}}, we can easily convert these trajectory weights $w^{(i)}$ to transitions weights $w^{(i)}_t$:
\begin{equation}
\max_{\mathcal{W}} \sum_{(i)} w^{(i)}\sum^{\infty}_{t=0}\gamma^t R^{(i)}_{t} \Longleftrightarrow \begin{cases} &\max_{\mathcal{W}} \sum_{(i)} \sum^{\infty}_{t=0}w^{(i)}_t\gamma^t R^{(i)}_{t} \\
&w^{(i)}_t \replaced{\triangleq}{\doteq} w^{(i)}
\end{cases}
\end{equation}
It is important to note that this condition holds only when all possible policies are traversed to obtain the optimal policy. In doing so, the resulting solution is assigning all the weights to a small set of trajectories with maximum return. However, we can only access partial policies from past iterations. In that case, this straightforward solution would fail. As suggested in \cite{hong2022harnessing}, we add entropy regularization to prevent the weight distribution from becoming overly concentrated:
\begin{equation}
\max_{\mathcal{W}} \sum_{(i)} w^{(i)}\sum^{\infty}_{t=0}\gamma^t R^{(i)}_{t} - \alpha \sum_{(i)} w^{(i)}\log w^{(i)}
\label{traj_weighting}
\end{equation}
\replaced{w}{W}here $\alpha \in \mathbb{R}^+$ is a temperature parameter controlling the penalty strength.

\added{
By solving the Boltzmann distribution in Equation \ref{traj_weighting} and normalizing the weights by enforcing the constraint that they sum to 1, we obtain the following transition distribution:
\begin{equation}
w^{(i)}_t \leftarrow \frac{\exp \left(\frac{1}{\alpha}\sum^{\infty}_{t=0}\gamma^t R^{(i)}_{t} \right)}{\sum_{(i)} T_i\exp \left(\frac{1}{\alpha}\sum^{\infty}_{t=0}\gamma^t R^{(i)}_{t}\right)}
\label{main_normalized_weighting_distribution}
\end{equation}
\replaced{w}{W}here $T_i$ denotes the length of trajectory $\tau^{(i)}$. For the detailed derivation of Equation \ref{main_normalized_weighting_distribution}, we refer the reader to \ref{derivation_transition_distribution}, "\nameref{derivation_transition_distribution}". In episodic RL, $\sum^{\infty}_{t=0} \gamma^t R^{(i)}_{t}$ is equivalent to the episodic return, that is, the cumulative sum of all rewards in a trajectory, i.e., $\sum^{\infty}_{t=0} \gamma^t R^{(i)}_{t} \Leftrightarrow \sum^{T_i}_{t=0} R^{(i)}_{t}$.
}

On the other hand, we are making the strong assumption that both the initial state and the final goal follow a Dirac delta distribution. As Zhang et al. \cite{hong2022harnessing} suggested, a trick can be employed to extend both the initial state and the final goal distribution to be stochastic. We consider varying combinations of the initial state and the final goal as different tasks, where trajectories starting from lucky initial states or reaching easier goals would yield higher returns compared to others. Suppose we have trajectories $\{\tau^{(i, 0)}, \dots\, \tau^{(i, j)}, \dots\}$ starting from $s^{(i)}_{0}$ and aiming to reach goal $g^{(i)}$. Firstly, since the scale of episodic returns varies across "tasks", we can normalize these episodic returns using a max-min normalization. Then, we can estimate the expected return $\bar{A}^{(i)}$ of each initial-state-final-goal pair $ \langle s^{(i)}_{0}, g^{(i)} \rangle$, using regression: $\bar{A}^{(i)} \leftarrow \arg\min_{\bar{A}}\mathbb{E}_{j}[(\sum^{\infty}_{t=0}\gamma^t R^{(i, j)}_{t} - \bar{A})^2]$, and subtract the expected return from the episodic return to remove the stochasticity:
\begin{equation}
w^{(i)}_t \leftarrow \frac{\exp \left[\frac{1}{\alpha}\left(\sum^{\infty}_{t=0}\gamma^t R^{(i)}_{t} - \bar{A}^{(i)}\right) \right]}{\sum_{(i)}T_i\exp \left[\frac{1}{\alpha}\left(\sum^{\infty}_{t=0}\gamma^t R^{(i)}_{t} - \bar{A}^{(i)}\right) \right]}
\label{final_normalized_weighting_distribution}
\end{equation}

\added{In practice, the $ \langle s^{(i)}_{0}, g^{(i)} \rangle$ pairs lie in a high-dimensional space, potentially containing features irrelevant to the reward. To address this, we first fit a random forest regressor to model the relationship between $ \langle s^{(i)}_{0}, g^{(i)} \rangle$ and the corresponding reward. Based on the resulting feature importance scores, the top-6 most informative features are selected. The reduced representation of $ \langle s^{(i)}_{0}, g^{(i)} \rangle$ after feature selection is then used as input to a linear regression model to estimate the expected return $\bar{A}^{(i)}$. Here, we employ a gradient-boosting regressor with 50 estimators when the number of samples is fewer than 250 and switch to a simple linear regressor when more samples are available.} Finally, when sampling states to build a directed graph (see Section \ref{Landmark_Sampling}), we replace uniform sampling with weighted sampling, where the transitions in the dataset are re-weighted using $w^{(i)}_t$.

\subsection{Model-free Q-Gradient Penalty}
In a recent study, Wang et al. \cite{wang2024guided} proposed to pose the Lipschitz constraint on the Q-function gradients to stabilize the Q-learning of lower-level policy. The relevant experimental results emphasized the importance of stabilizing the lower-level behavioral policy. The underlying rationale is that enhancing the robustness of the behavioral policy can prevent catastrophic failures, such as collisions or tipping over. Yet, in that study \cite{wang2024guided}, the gradient of the Q-function was clamped by a model-inferred upper bound. That means the dynamics models approximating the true transition dynamics of the environment will be learned to provide estimates of the upper bound of reward gradients with respect to the actions. However, in complex environments that closely resemble real-world scenarios with high-dimensional observation, learning the true transition dynamics of the environment will be particularly challenging and costly. In this study, we propose a model-free Q-function gradient penalty to provide an alternative method and eliminate dependence on models. Next, we begin with a proposition that serves as the cornerstone of the formula derivation.

\begin{prop}[Q-function Lipschitzness]
\label{prop:1}
Given an MDP with the deterministic dynamics of the environment, where the deterministic policy $\pi(a_t, s_t)$ and the reward function $r(s_t, a_t)$ are differentiable over their respective input spaces. The differentiability property is satisfied by using the usual neural network-based approximators. Let us assume that the upper bounds of Frobenius norm of the policy and reward gradients w.r.t. input states are $L_{\mathcal{S}}$ and $L_{r}$, respectively, i.e., $\Vert \frac{\partial s_{t+1}}{\partial s_t}\Vert_F \leq L_{\mathcal{S}} \leq 1$ and $\lVert \frac{\partial r(s_t, a_t)}{\partial{s_t}} \rVert_F \leq  L_{r}$. Then, the gradient of the learned Q-function w.r.t. states can be upper-bounded as:
\begin{equation}
\Vert \nabla_{s_t}Q(s_t,a_t) \Vert_F \leq \frac{\sqrt{N_s}L_{r}}{1-\gamma L_{\mathcal{S}}}
\end{equation}
\replaced{w}{W}here $N_s$ is defined as the dimension of the states and $\gamma$ is the discount factor.
\end{prop}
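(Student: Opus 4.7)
The plan is to derive the bound by exploiting the Bellman recursion for $Q$ together with the deterministic closed-loop dynamics, converting it into a recursive inequality on $\Vert \nabla_{s_t} Q \Vert_F$ that collapses into a geometric series.

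First, I would start from the Bellman identity in the deterministic setting, $Q(s_t, a_t) = r(s_t, a_t) + \gamma\, Q\bigl(s_{t+1}, \pi(s_{t+1})\bigr)$, where the successor state $s_{t+1}$ is a deterministic function of $(s_t, a_t)$ and the successor action is produced by the policy. Differentiating both sides with respect to $s_t$ while holding $a_t$ fixed, the chain rule yields
\begin{equation}
\nabla_{s_t} Q(s_t, a_t) \;=\; \nabla_{s_t} r(s_t, a_t) \;+\; \gamma \left(\frac{\partial s_{t+1}}{\partial s_t}\right)^{\!\top} \nabla_{s_{t+1}}\! \bigl[Q(s_{t+1}, \pi(s_{t+1}))\bigr].
\end{equation}
The key interpretive point is that the Jacobian $\partial s_{t+1}/\partial s_t$ is taken as the \emph{closed-loop} Jacobian, so the policy dependence $\partial \pi/\partial s$ is already absorbed into the constant $L_{\mathcal{S}}$ stated in the hypothesis.

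Next, I would take Frobenius norms of both sides, apply the triangle inequality, and use the standard inequality $\Vert A^{\top} v \Vert_2 \leq \Vert A \Vert_F\, \Vert v \Vert_2$ for the matrix-vector product, together with the two hypothesized Lipschitz bounds. After a norm-conversion step accounting for the dimensionality of the state gradient — e.g.\ an inequality of the form $\Vert v \Vert_F \leq \sqrt{N_s}\,\Vert v \Vert_\infty$ applied to a coordinate-wise bound on $\nabla_s r$, which is where the $\sqrt{N_s}$ factor surfaces — this produces the recursion
\begin{equation}
\Vert \nabla_{s_t} Q \Vert_F \;\leq\; \sqrt{N_s}\, L_r \;+\; \gamma\, L_{\mathcal{S}}\, \Vert \nabla_{s_{t+1}} Q \Vert_F.
\end{equation}

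Finally, because $\gamma \in [0,1)$ and $L_{\mathcal{S}} \leq 1$, the contraction factor $\gamma L_{\mathcal{S}} < 1$. Unrolling the recursion indefinitely therefore yields a convergent geometric series whose sum is exactly $\sqrt{N_s}\, L_r / (1 - \gamma L_{\mathcal{S}})$, giving the claimed bound. The main obstacle I anticipate is cleanly handling the implicit policy dependence inside $Q(s_{t+1}, \pi(s_{t+1}))$: one must justify that the closed-loop Jacobian already subsumes the $\partial \pi/\partial s$ contribution, so that a single constant $L_{\mathcal{S}}$ suffices for both the direct and policy-mediated propagation of perturbations. A secondary subtlety is making the dimensional conversion producing $\sqrt{N_s}$ explicit, since depending on whether the hypothesis on $\nabla r$ is read as a Frobenius bound or as a coordinate-wise bound, the constant can shift by precisely this factor and the derivation should pin down which convention is in force.
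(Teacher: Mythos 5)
Your overall strategy -- differentiate the deterministic Bellman recursion, contract with $\gamma L_{\mathcal{S}}<1$, and sum the geometric series -- is the natural route and matches the structure the paper itself leans on (its Corollary~1 proof in the appendix simply substitutes $L_r=\sup\Vert\partial r/\partial s_t\Vert_F$ into exactly this $\tfrac{\sqrt{N_s}}{1-\gamma L_{\mathcal{S}}}L_r$ form and then uses monotonicity in $L_{\mathcal{S}}$). However, two points in your write-up need repair. First, your "absorption" claim is not right as stated: when you differentiate $Q\bigl(s_{t+1},\pi(s_{t+1})\bigr)$ with respect to $s_{t+1}$, the policy-mediated term enters through $(\partial\pi/\partial s_{t+1})^{\top}\nabla_{a}Q$, and the closed-loop Jacobian $\partial s_{t+1}/\partial s_t$ cannot soak that up -- it only governs how $s_{t+1}$ responds to $s_t$, not how the continuation value responds to $s_{t+1}$ through the next action. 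The clean fix is to run the recursion on the value function $V(s)=Q(s,\pi(s))$: from $V(s_t)=r(s_t,\pi(s_t))+\gamma V(s_{t+1})$ with $s_{t+1}$ the closed-loop successor, the \emph{total} derivatives are exactly the quantities bounded by $L_r$ and $L_{\mathcal{S}}$, giving $\Vert\nabla V\Vert_F\le L_r/(1-\gamma L_{\mathcal{S}})$ (with the tail term killed by a finite-horizon truncation or a prior uniform bound, which your "unroll indefinitely" step silently assumes); then one application of $Q(s_t,a_t)=r(s_t,a_t)+\gamma V(s_{t+1})$ with $a_t$ fixed and $\Vert\partial s_{t+1}/\partial s_t\Vert_F\le L_{\mathcal{S}}\le 1$ yields $\Vert\nabla_{s_t}Q\Vert_F\le L_r+\gamma L_{\mathcal{S}}\cdot\tfrac{L_r}{1-\gamma L_{\mathcal{S}}}=\tfrac{L_r}{1-\gamma L_{\mathcal{S}}}$.

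Second, as you yourself flag, under the hypothesis as literally stated ($\Vert\partial r/\partial s_t\Vert_F\le L_r$ is already a Frobenius bound) this argument produces $\tfrac{L_r}{1-\gamma L_{\mathcal{S}}}$ with no $\sqrt{N_s}$; your proposed coordinate-wise reading to "surface" the $\sqrt{N_s}$ is speculation rather than derivation. This is not fatal -- since $\sqrt{N_s}\ge 1$, the tighter bound implies the proposition's claim a fortiori -- but you should state it that way (you prove a stronger inequality, and the paper's $\sqrt{N_s}$ is slack introduced by its own bookkeeping) rather than leaving the dimensional factor's origin unresolved. With the $V$-based recursion and the explicit tail/truncation argument in place, your proof is correct and in the same spirit as the paper's.
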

\begin{proof}
\replaced{The complete proof is provided in the Supplementary Materials.}{See Appendix A, "Proof of Proposition 1", in the ”Supplementary Materials”.}
\end{proof}

Proposition \ref{prop:1} proposes a tight upper bound. We can derive a corollary from Proposition \ref{prop:1} corresponding to the lower-level Q function.
\begin{corollary}[Lower-level Q-function Lipschitzness]
Given that, in usual goal-conditioned hierarchical reinforcement learning, the internal reward received by the lower-level agent is time-independent quantities defined as: $r^l_t=-\lVert sg_{t+1} - \eta \varphi(s_{t+1}) \rVert_2$. By applying such a preconditioner to the reward, a relatively relaxed upper bound can be obtained:
\begin{equation}
\Vert \nabla_{s_t}Q(s_t,a_t) \Vert_F \leq \frac{\sqrt{N_s}}{1-\gamma} \cdot \sup \left \{\Vert \nabla_{s_t} \pi_{\theta^h} - \eta\nabla_{s_t} \varphi (s_t ) \Vert_F\right \}
\end{equation}
\replaced{w}{W}here $\pi_{\theta^h}$ is the higher-level policy that produces a subgoal $sg_t \sim \pi_{\theta^h}$, $\varphi$ is a mapping function that maps a state to the goal space, and $\eta$ denotes a binary function with regard to the relative/absolute subgoal scheme.
\end{corollary}
\begin{proof}
\added{See \ref{proof_corollary1}, "\nameref{proof_corollary1}".}
\end{proof}

We posit that, in this work, the mapping function $\varphi: \mathcal{S} \times \mathcal{G}$ is an element-wise function that picks out the goal-related elements from the state. Thus, in $\nabla_{s_t} \varphi (s_t )$, only the selected dimensions have gradients. In such an instance, we directly obtain: $\Vert\nabla_{s_t} \varphi (s_t )\Vert_F=\sqrt{N_{sg}}$, where $N_{sg}$ is defined as the dimensionality of the subgoals. Besides, the observation of the low-level policy involves two parts: the environmental state $s_t$ and the subgoal $sg_t$. Now, we take the derivative with respect to the two parts separately:

\begin{equation}
\begin{aligned}
\Vert \nabla_{s_t}Q(\langle s_t, sg_t \rangle,a_t) \Vert_F &\leq \frac{\sqrt{N_s}}{1-\gamma} \cdot \sup \left \{\Vert \nabla_{s_t} \pi_{\theta^h} - \eta\nabla_{s_t} \varphi (s_t ) \Vert_F\right \} \\
&\leq \frac{\sqrt{N_s}}{1-\gamma} \cdot \sup \left \{\Vert \nabla_{s_t} \pi_{\theta^h} \Vert_F\right \} + \eta \sqrt{N_{sg}}
\end{aligned}
\end{equation}
\begin{equation}
\begin{aligned}
\Vert \nabla_{sg_t}Q(\langle s_t, sg_t \rangle,a_t) \Vert_F &\leq \frac{\sqrt{N_{sg}}}{1-\gamma} \cdot \sup \left \{\Vert \nabla_{sg_t} \pi_{\theta^h} - \eta\nabla_{sg_t} \varphi (s_t ) \Vert_F\right \} \\
&\leq\frac{\sqrt{N_{sg}}}{1-\gamma} \cdot \sup \left \{\sqrt{N_{sg}} + \eta \Vert\varphi (\frac{\partial{s_t}}{\partial{sg_{t}}}) \Vert_F\right \} \\
&=\frac{\sqrt{N_{sg}}}{1-\gamma} \cdot \sup \left \{\sqrt{N_{sg}} + \eta \Vert \varphi(\frac{1}{\nabla_{s_t} \pi_{\theta^h}}) \Vert_F\right \} \\
\end{aligned}
\end{equation}
\replaced{w}{W}here $N_s$ and $N_{sg}$ are defined as the dimensionality of the environmental state $s_t$ and the subgoal $sg_t$, respectively.

As such, in effect, the remaining problem in the applications is easily solvable by estimating the gradients of higher-level policies $\pi_{\theta^h}$ w.r.t. the input variable $s_t$. In practice, we use a mini-batch independently sampled from the higher-level replay buffer $\mathcal{D}_{h}$ to estimate this gradient:
\begin{equation}
\label{upper_bound_s}
\begin{aligned}
\Vert \nabla_{s_t}Q(\langle s_t, sg_t \rangle,a_t) \Vert_F &\leq \frac{\sqrt{N_s}}{1-\gamma} \cdot \sup \left \{\Vert \nabla_{s_t} \pi_{\theta^h} \Vert_F\right \} + \eta \sqrt{N_{sg}}\\
&\simeq \frac{\sqrt{N_s}}{1-\gamma} \cdot \max_{s_t, g \sim \mathcal{D}_{h}} \Vert \nabla_{s_t} \pi_{\theta^h} \Vert_F + \eta \sqrt{N_{sg}}\\
 \end{aligned}
\end{equation}
\begin{equation}
\label{upper_bound_sg}
\begin{aligned}
\Vert \nabla_{sg_t}Q(\langle s_t, sg_t \rangle,a_t) \Vert_F &\leq \frac{\sqrt{N_{sg}}}{1-\gamma} \cdot \sup \left \{\sqrt{N_{sg}} + \eta \Vert \varphi(\frac{1}{\nabla_{s_t} \pi_{\theta^h}}) \Vert_F\right \} \\
&\simeq \frac{\sqrt{N_{sg}}}{1-\gamma} \cdot \left[\sqrt{N_{sg}} + \eta \max_{s_t, g \sim \mathcal{D}_{h}} \lVert\frac{1}{\varphi(\nabla_{s_t} \pi_{\theta^h})}\rVert_F \right] \\
 \end{aligned}
\end{equation}
Here, assuming that $\nabla_{s_{t,i}} \pi_{\theta^h} \neq 0$, $\forall s_{t,i} \in \mathbb{R}$ and $\forall i \in [1, N_s] \cap {\mathbb {N}}$, the $\frac{1}{\varphi(\nabla_{s_t} \pi_{\theta^h})}$ can be computed directly by applying an inverse operation to its Jacobian matrix. Next, for brevity, we introduce the shorthands $B^Q_{s}$, $B^Q_{sg}$ for representing the two time-independent upper bounds in Eqs. \eqref{upper_bound_s} and Eqs. \eqref{upper_bound_sg}. Finally, following prior works \cite{wang2024guided}, we derive the gradient penalty term, formally defined as:
\begin{equation}
\begin{aligned}
\mathcal{L}_{gp}(\phi^l) = \lambda_{gp} \cdot & \mathbb{E}_{s_t, sg_t\sim \mathcal{D}_{l}, \atop a_t \sim\pi(s_t, sg_t;\theta^{l})}\left\{ [{\rm ReLU}(\Vert \nabla_{s_t}Q(\langle s_t, sg_t \rangle,a_t; \phi^l) \Vert_F - B^Q_{s})]^2 \right. \\
&\qquad\qquad\left.+[{\rm ReLU}(\Vert \nabla_{sg_t}Q(\langle s_t, sg_t \rangle,a_t;\phi^l) \Vert_F - B^Q_{sg})]^2 \right \}\\
\text{s.t.}\quad s_t, g \sim \mathcal{D}_{h} &\text {,}\quad {\left\{ \begin{array}{ll} B^Q_{s} {:}{=} \frac{\sqrt{N_s}}{1-\gamma} \cdot \max_{s_t, g} \Vert \nabla_{s_t} \pi_{\theta^h} \Vert_F + \eta \sqrt{N_{sg}} \\ B^Q_{sg} {:}{=}\frac{\sqrt{N_{sg}}}{1-\gamma} \cdot \left[\sqrt{N_{sg}} + \eta \max_{s_t, g} \lVert\frac{1}{\varphi(\nabla_{s_t} \pi_{\theta^h})}\rVert_F \right]  \end{array}\right. }
\end{aligned}
\label{all_mf_gp_eq}
\end{equation}
\replaced{w}{W}here $\lambda_{gp}$ is a hyperparameter controlling the effect of the gradient penalty term. Meanwhile, we introduce Gaussian noise with a mean of 0 and a variance of $\delta_{gp}$ (generally set to 2 for mazes of size $12\times12$ and 4 for $24\times24$)  to $s_t$ and $sg_t$ to enhance the generalization performance of local Lipschitz constraints. Then, the gradient penalty term is plugged into the lower-level Q-learning loss. Because the gradient penalty enforces the Lipschitz constraint on the critic, limiting its update, we have to increase the number of critic training iterations to 5, a recommended value in WGAN-GP \cite{gulrajani2017improved}, per actor iteration. Considering the computational efficiency, we apply the gradient penalty every 5 training steps, consistent with prior work \cite{wang2024guided}.

\section{Experiments}
\label{sec:experiments}

\subsection{Environment Setup}
\label{subsec: env}

We employ a suite of challenging continuous control tasks to evaluate the effectiveness of the proposed method. As shown in Fig. \ref{fig:environments}, the benchmark tasks include several long-horizon maze navigation and robotic arm manipulation tasks, which are commonly used in the HRL literature \cite{zhang2020generating, zhang2022adjacency, kim2021landmark, wang2024guided}. \textit{Maze navigation}: (a) \textbf{Ant Push}\footnote{Here, we used a variant provided by Li et al.\cite{li2021active}, in which the action space range is reduced to between -16 and 16. As the saying goes, "Soft fire makes sweet malt."}: the ant-like robot starts at the bottom corner and navigates to the top corner. However, a movable block at (0, 4) obstructs its path. The robot needs to push the block to the right to reach the goal. The goal position is fixed at (0, 8). For Ant Push, "success" is defined as reaching within an L2 distance of 1.5 from the goal, while for the other aforementioned regular mazes, the threshold is 2.5. (b \& c) \textbf{Ant Maze (U-shape)} \& \textbf{Large Ant Maze (U-shape)}: the ant-like robot starts at the bottom left corner of a '$\sqsupset$'-shaped corridor and navigates to the target location at the top left corner. The \textbf{Large} maze has a size of $24 \times 24$, twice the size of the regular maze ($12 \times 12$). \textit{Robot arm manipulation}: (d) \textbf{FetchPush}: a box is randomly placed on the table surface, and a 7-DoF fetch arm with a two-fingered parallel gripper is manipulated to move the box to the target position. Here, the gripper of the manipulator arm is locked in a closed configuration. (e) \textbf{Pusher}: a puck-shaped object is placed on a plane and must be pushed to a target position using the end effector of a 7-DOF robotic arm.

\begin{figure*}[h]
\captionsetup[subfloat]{format=hang, justification=centering}
\centering
\subfloat[Ant Push]{\includegraphics[width=0.19\textwidth]{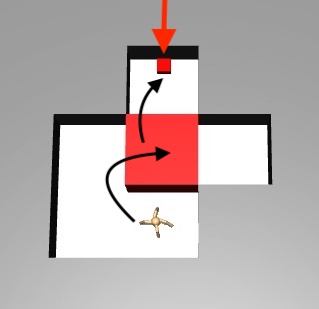}} \hspace{0.05em}
\subfloat[Ant Maze \protect\\(U-shape)]{\includegraphics[width=0.183\textwidth]{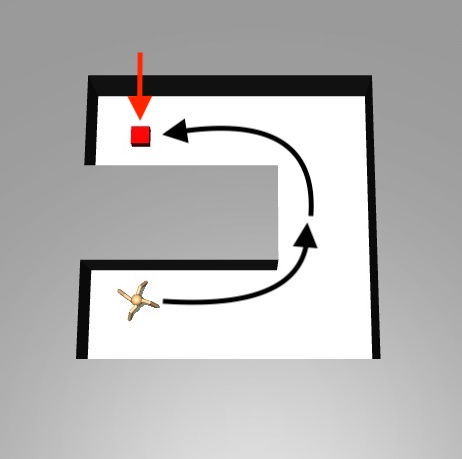}} \hspace{0.05em}
\subfloat[Large \protect\\ Ant Maze (U-shape)]{\includegraphics[width=0.1885\textwidth]{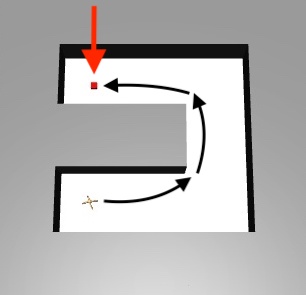}} \hspace{0.05em}
\subfloat[FetchPush]{\includegraphics[width=0.178\textwidth]{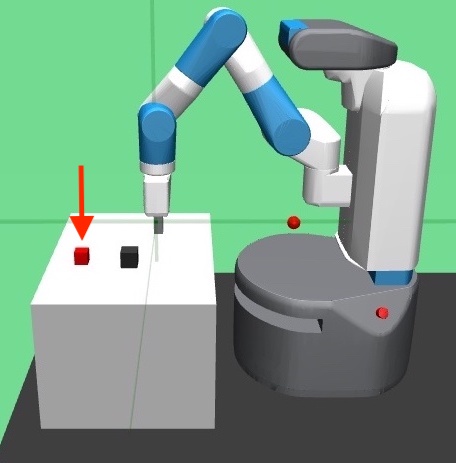}} \hspace{0.05em}
\subfloat[Pusher]{\includegraphics[width=0.1815\textwidth]{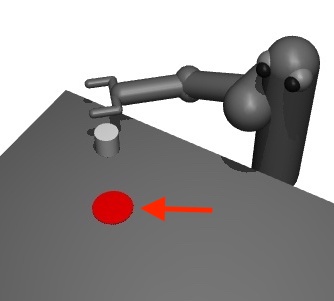}} \hspace{0.05em}
\caption{Environments used in our experiments.  The desired goal in each task is marked with a red arrow, and the black line represents a possible trajectory to reach the goal.}
\label{fig:environments}
\end{figure*}

\added{Our experiments are conducted in the aforementioned environments under \textbf{sparse} reward settings, in which the agent incurs a penalty of -1 for failing to reach the target and receives a reward of 0 upon success.} More tasks\footnote{In these tasks, including \textit{Stochastic} Ant Maze (U-shape), Ant Maze (W-shape), FetchPush, FetchPickAndPlace, etc., the proposed method demonstrates performance comparable to or slightly better than the prior state-of-the-art, ACLG+GCMR.} and comparative experiments are detailed in the "Supplementary Materials".

\subsection{Implementation and Evaluation}
\label{subsec: ie}
In all the experiments, we use the TD3 algorithm for instantiating lower- and higher-level agents, following prior works \cite{zhang2020generating, zhang2022adjacency, kim2021landmark, wang2024guided}. Also, for a fair comparison, we employ the same network structure and optimizer as that of these prior works \cite{zhang2020generating, zhang2022adjacency, kim2021landmark, wang2024guided}. Then, we integrate the proposed extensions into ACLG \cite{wang2024guided}, a state-of-the-art HRL algorithm, to validate their effectiveness. The complete pseudocode of the overall framework is provided in the "Supplementary Materials". Most hyperparameter settings remain consistent with those in vanilla ACLG, such as the learning rate, the soft update coefficient, and the standard deviation of the noise. However, a few parameters, such as the balancing coefficient $\lambda^{\rm ACLG}_{\rm LM}$ for the landmark planning term used in this study, differ from those in ACLG. In fact, parameter search reveals that the proposed high-reward sampling with a larger $\lambda^{\rm ACLG}_{\rm LM}$ yields better performance. We provide a detailed discussion of several key hyperparameters in the following sections. All the experiments are carried out on a computer with the configuration of Intel(R) Xeon(R) Gold 5220 CPU @ 2.20GHz, 8-core, 64 GB RAM. Each experiment is processed using a single GPU (Tesla V100 SXM2 32 GB). In the end, we report the mean and standard deviation of performance metrics over 5 independent runs with varying random seeds. For each run, we evaluate 10 test episodes at every $5000^{th}$ time step. More detailed experimental configurations and implementation details are available in the "Supplementary Materials".

\subsection{Hyperparameter Selection}
Firstly, we conduct experiments to investigate the effects of several key hyperparameters in ACLG \cite{wang2024guided} on the performance of the proposed framework ACLG $+$ HG2P. These hyperparameters include (1) the number of landmarks $|\mathcal{S}_{LM}^{\rm cov} \cup \mathcal{S}_{LM}^{\rm nov}|$ used for graph construction, (2) the balancing coefficient $\lambda^{\rm ACLG}_{\rm LM}$, which controls the influence of landmark-based planning, and (3) the penalty coefficient $\lambda_{gp}$, which regulates the strength of the Q-function gradient penalty. Ablation tests on hyperparameter selection are conducted in the Ant Push and Ant Maze (U-shape) tasks, with the results shown in the first and second rows of Fig.~\ref{fig:hyper_params}.

\begin{figure*}[h]
\captionsetup[subfloat]{format=hang, justification=centering}
\centering
\subfloat[Ant Push]{\includegraphics[width=0.34\textwidth]{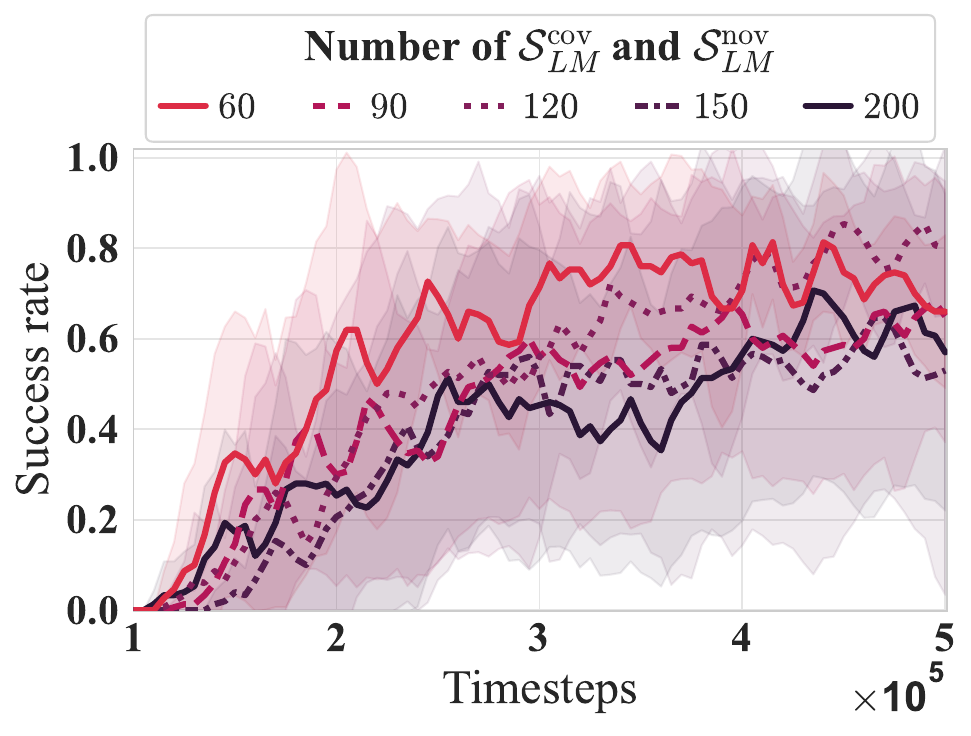}
\label{fig:antpush_ld}} \hspace*{-0.75em}
\subfloat[Ant Push]{\includegraphics[width=0.335\textwidth]{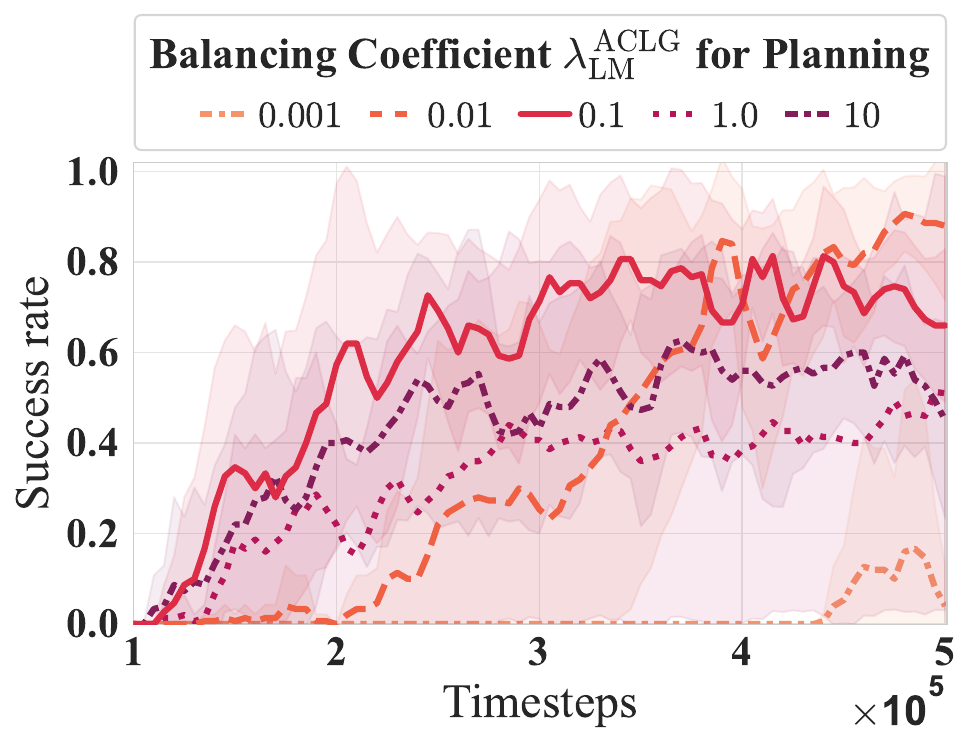}
\label{fig:antpush_llc}} \hspace*{-0.75em}
\subfloat[Ant Push]{\includegraphics[width=0.348\textwidth]{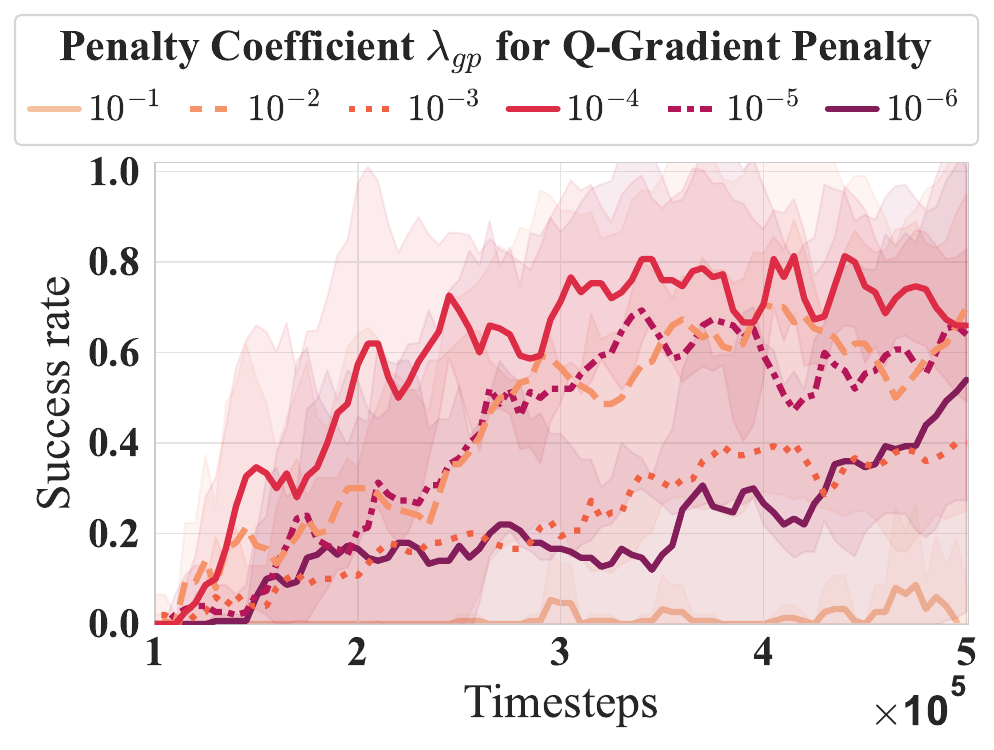}
\label{fig:antpush_gp}} \\ \vspace*{-0.8em}
\subfloat[Ant Maze (U-shape)]{\includegraphics[width=0.34\textwidth]{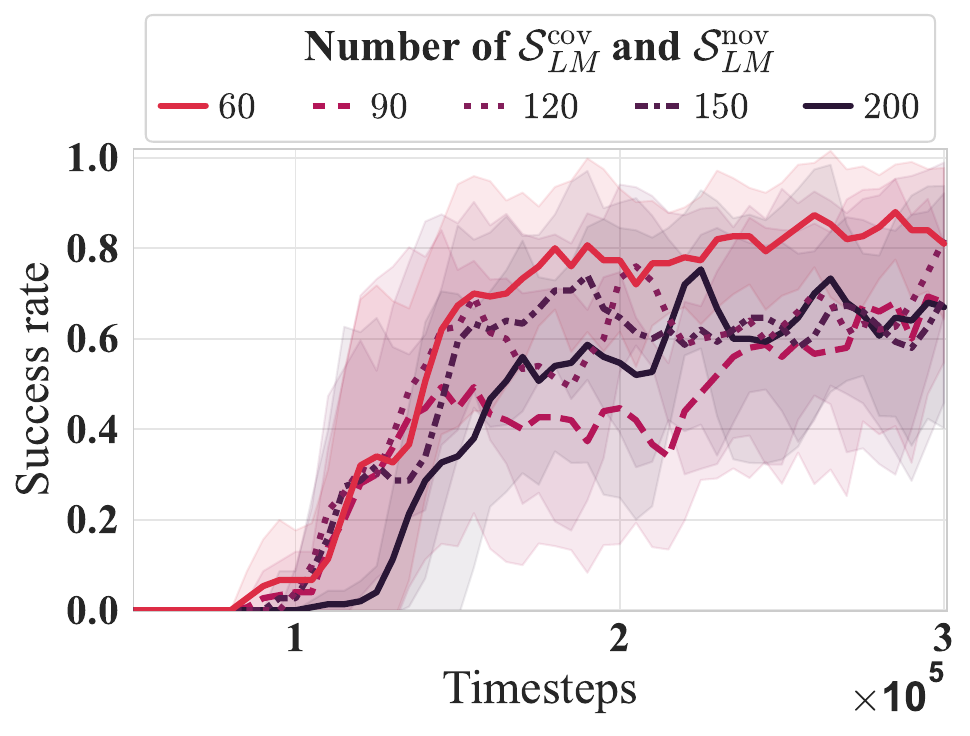}
\label{fig:antmaze_ld}} \hspace*{-0.75em}
\subfloat[Ant Maze (U-shape)]{\includegraphics[width=0.335\textwidth]{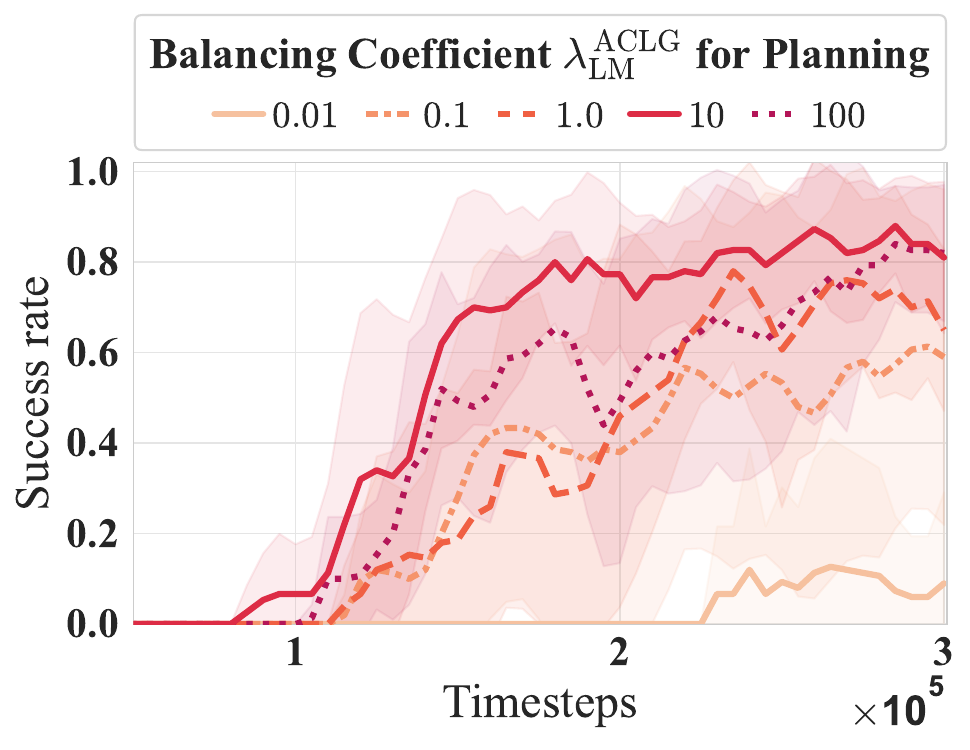}
\label{fig:antmaze_llc}} \hspace*{-0.75em}
\subfloat[Ant Maze (U-shape)]{\includegraphics[width=0.348\textwidth]{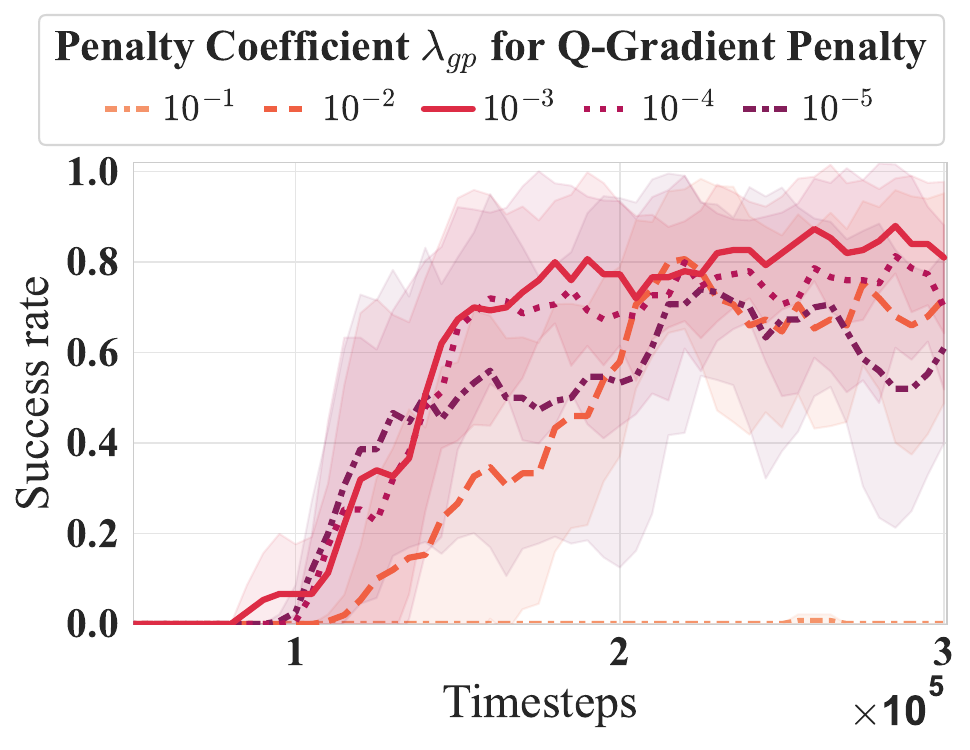}
\label{fig:antmaze_gp}}
\caption{Ablation studies on hyperarameter selection in the Ant Push (first row) and Ant Maze (U-shape) (second row) tasks. The first, second, and third columns illustrate the learning curves of the proposed framework with varying numbers of landmarks $|\mathcal{S}_{LM}^{\rm cov} \cup \mathcal{S}_{LM}^{\rm nov}|$, varying balancing coefficient $\lambda^{\rm ACLG}_{\rm LM}$, and varying penalty coefficient $\lambda_{gp}$, respectively. The success rate is averaged over five random seeds. }
\label{fig:hyper_params}
\end{figure*}

(1) \textbf{Number of landmarks} is an important and common hyper-parameter in landmark-guided HRL \cite{zhang2020generating, zhang2022adjacency, zhang2021world, huang2019mapping, kim2021landmark, lee2022dhrl, kim2023imitating, wang2024guided}. Here, we sample the same number of novelty- and coverage-based landmarks, i.e., $|\mathcal{S}_{LM}^{\rm cov}|=|\mathcal{S}_{LM}^{\rm nov}|$. As shown in Fig.~\ref{fig:hyper_params}\subref{fig:antpush_ld} and \ref{fig:hyper_params}\subref{fig:antmaze_ld}, the best learning performance is obtained when the numbers of landmarks $\mathcal{S}_{LM}^{\rm cov}$ and $\mathcal{S}_{LM}^{\rm nov}$ are 60, which is consistent with the result reported in the literature \cite{wang2024guided}.
(2) \textbf{Balancing coefficient $\lambda^{\rm ACLG}_{\rm LM}$} determines the effect of the landmark-guided planning term on the final performance. As shown in Fig.~\ref{fig:hyper_params}\subref{fig:antpush_llc} and \ref{fig:hyper_params}\subref{fig:antmaze_llc}, we find that a $\lambda^{\rm ACLG}_{\rm LM}$ value that is too small results in a catastrophic performance drop, which shows the importance of the planning term. The best performance is achieved when $\lambda^{\rm ACLG}_{\rm LM}$ is set to 0.1 for Ant Push and 10 for Ant Maze (U-shape), respectively. (3) \textbf{Penalty coefficient $\lambda_{gp}$} regulates the impact of the gradient penalty on the lower-level value function, as originally proposed by Wang et al. \cite{wang2024guided}. \deleted{However, the penalty term reported in the literature \cite{wang2024guided} concerns the upper bounds of the Frobenius norm of the Q-function gradients w.r.t. input actions, whereas in this study, we derive the Lipschitz constraint from the perspective of the Q-function gradients w.r.t. input states. }Due to the difference in the scale of the gradient penalty loss, $\lambda_{gp}$ needs to be re-tuned. As shown in Fig.~\ref{fig:hyper_params}\subref{fig:antpush_gp} and \ref{fig:hyper_params}\subref{fig:antmaze_gp}, the gradient penalty contributes to a higher success rate; however, an excessively large penalty will limit the agent's exploration efficiency. The recommended value for the parameter is 0.0001.

\begin{figure*}[h]
\captionsetup[subfloat]{format=hang, justification=centering}
\centering
\hspace*{11.2em}
\subfloat{\includegraphics[width=0.6\textwidth]{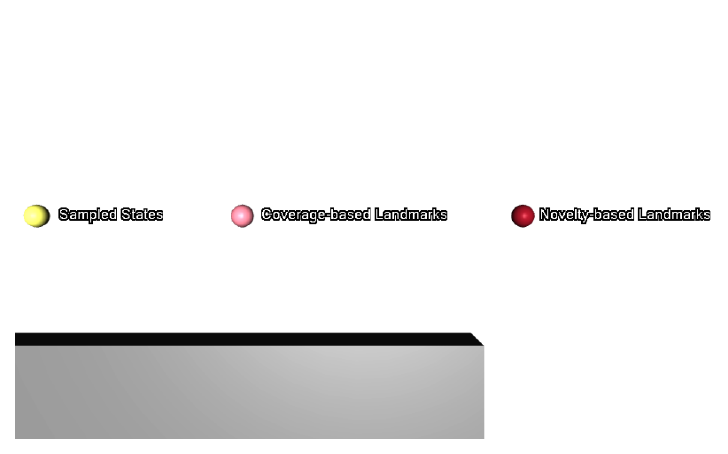}} \vspace*{-2.5em}\\
\setcounter{subfigure}{0}
\subfloat[Ant Push]{\includegraphics[width=0.34\textwidth]{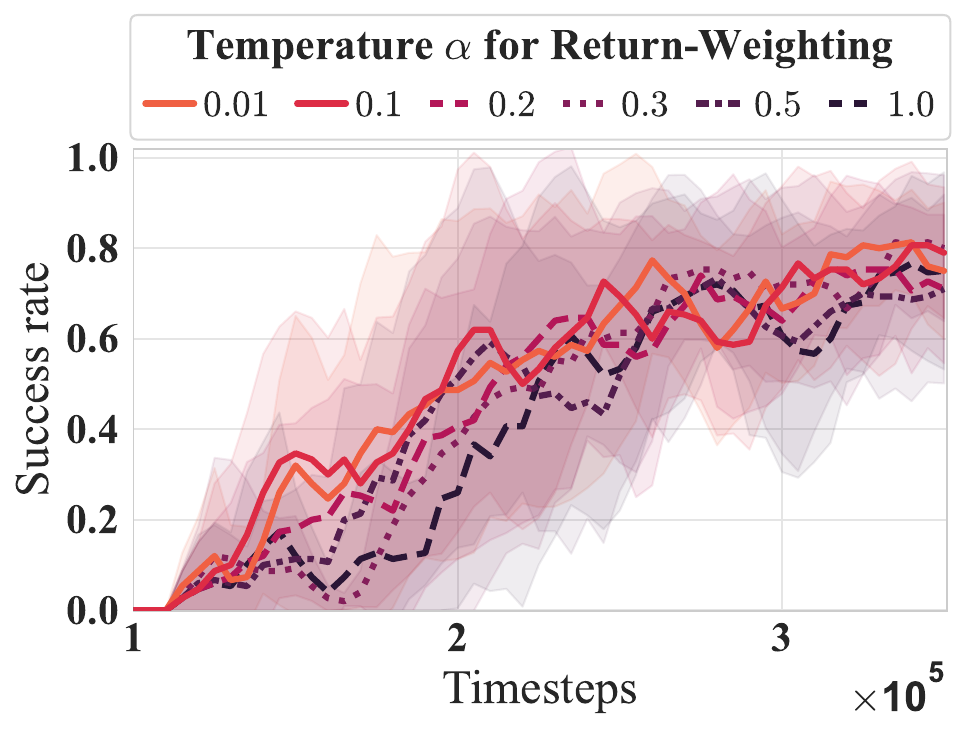}
\label{fig:antpush_wa}} \hspace*{-0.46em}
\subfloat[$\alpha=0.01$]{\includegraphics[width=0.22\textwidth]{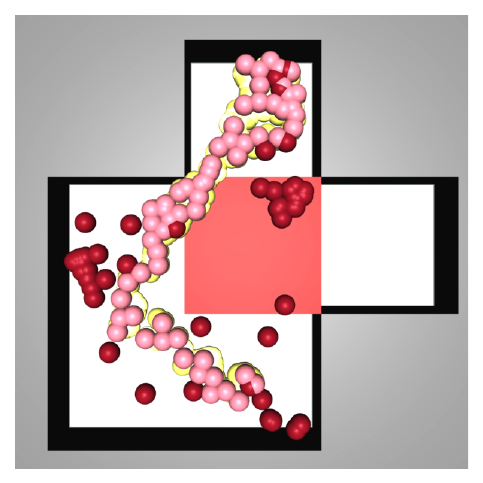}
\label{fig:}} \hspace*{-0.46em}
\subfloat[$\alpha=0.1$]{\includegraphics[width=0.22\textwidth]{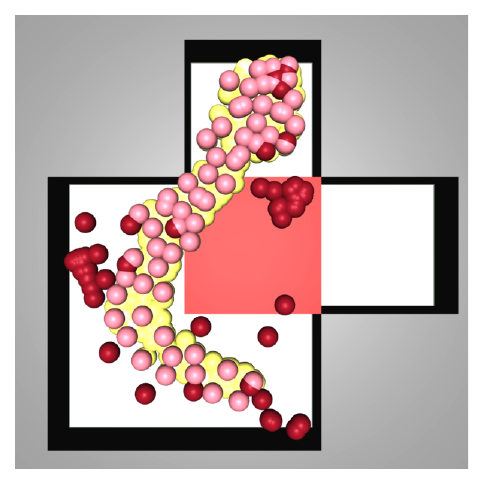}
\label{fig:}} \hspace*{-0.46em}
\subfloat[$\alpha=10$]{\includegraphics[width=0.22\textwidth]{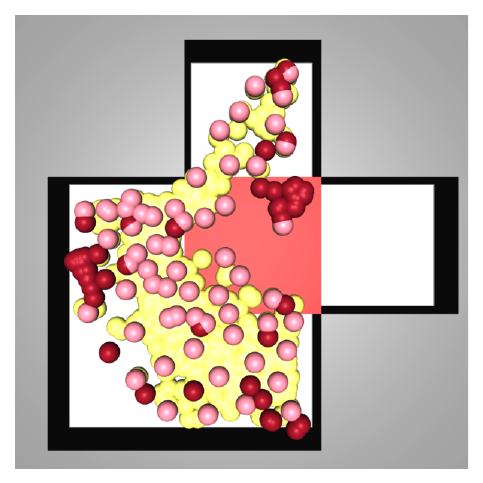}
\label{fig:}} \\ \vspace*{-0.6em}
\subfloat[Ant Maze (U-shape)]{\includegraphics[width=0.34\textwidth]{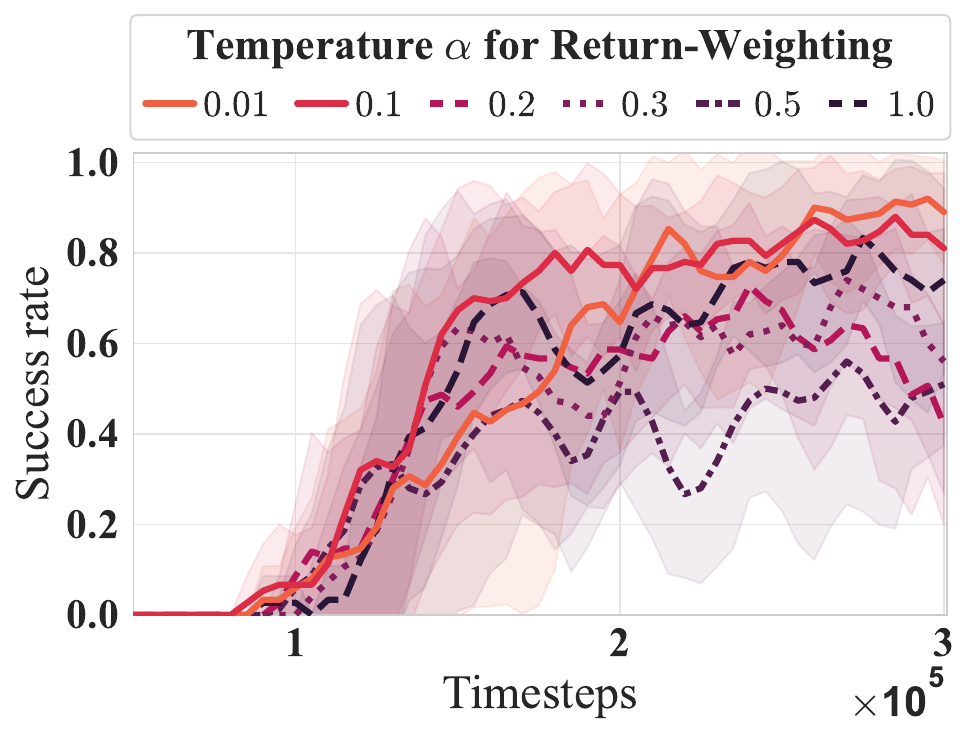}
\label{fig:antmaze_wa}} \hspace*{-0.46em}
\subfloat[$\alpha=0.01$]{\includegraphics[width=0.22\textwidth]{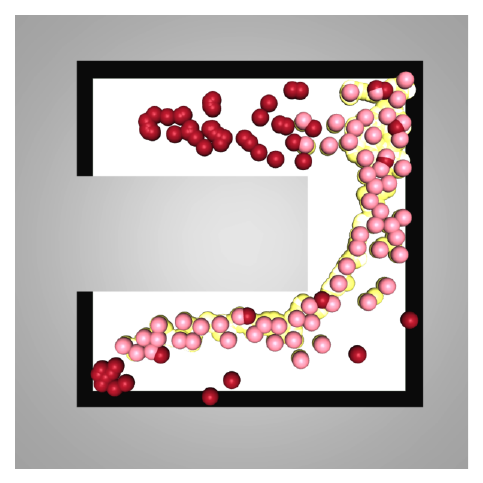}
\label{fig:}} \hspace*{-0.46em}
\subfloat[$\alpha=0.1$]{\includegraphics[width=0.22\textwidth]{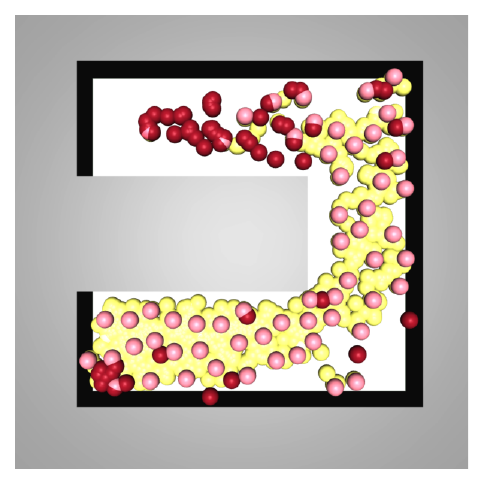}
\label{fig:}} \hspace*{-0.46em}
\subfloat[$\alpha=10$]{\includegraphics[width=0.22\textwidth]{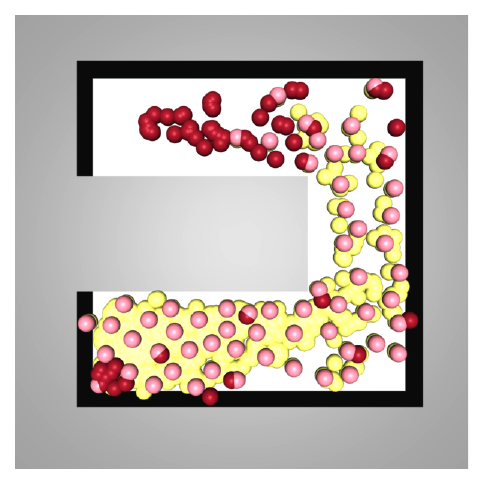}
\label{fig:}} 
\caption{The first column illustrates the performance of our method with varying temperature $\alpha$ in the \protect\subref{fig:antpush_wa} Ant Push and \protect\subref{fig:antmaze_wa} Ant Maze (U-shape) tasks. The remaining columns visualize the sampled landmarks, providing a qualitative analysis of the weighted sampling strategy at different temperatures $\alpha$. Here, we sample 1000 previously visited states (yellow balls), sparsify them to obtain 60 coverage-based landmarks (pink balls), and additionally sample 60 novelty-based landmarks (red balls).}
\label{fig:hyper_params_wa}
\end{figure*}

Then, we conduct experiments to analyze how the \textbf{high-return (HR) sampling} strategy works and how its temperature parameter $\alpha$ impacts the performance of the proposed HG2P framework. In Fig.~\ref{fig:hyper_params_wa}\subref{fig:antpush_wa} and \ref{fig:hyper_params_wa}\subref{fig:antmaze_wa}, we investigate how sensitive the choice of temperature $\alpha$ is. The results show that the proposed framework is not overly sensitive to the choice of $\alpha$, especially in near-deterministic environments with fixed goals, such as the Ant Push task\footnote{Ant Push is more suitable for episodic learning, as both the training and testing phases share the same start and goal. In contrast, in Ant Maze, the goal is randomly generated during training but fixed during testing.}. Additionally, it can be observed that smaller values of $\alpha$ help improve the asymptotic performance of the proposed framework. In fact, the temperature $\alpha$ controls the strength of entropy regularization for sampling weights. As $\alpha \rightarrow \infty$, the weighted sample distribution converges to a uniform distribution; $\alpha \rightarrow 0$, the distribution converges to a Dirac delta distribution. In Fig.~\ref{fig:hyper_params_wa}, we additionally visualize the sampled landmarks obtained through the weighted sampling strategy with values of $\alpha$ set at 0.01, 0.1, and 10. Through this qualitative analysis, it is clear that the new sampling strategy with smaller values of $\alpha$ will focus more on rare high-return trajectories. Overall, the optimal performance is achieved when $\alpha$ is set to 0.1, aligning with the results presented in the literature \cite{hong2022harnessing}.

\begin{figure*}[h]
\captionsetup[subfloat]{format=hang, justification=centering}
\centering
\subfloat[Ant Push]{\includegraphics[width=0.34\textwidth]{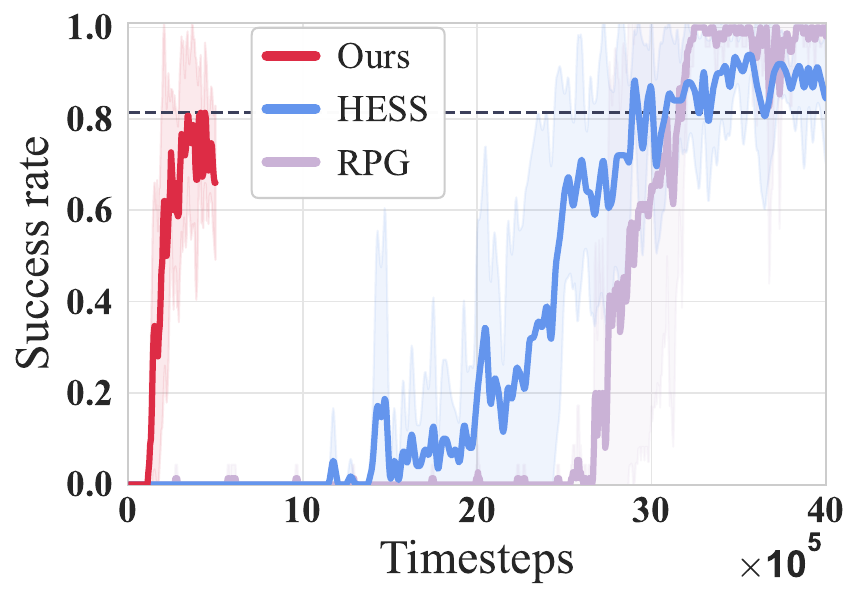}
\label{fig:antpush_hess}} \hspace*{-0.46em}
\subfloat[Ant Push]{\includegraphics[width=0.34\textwidth]{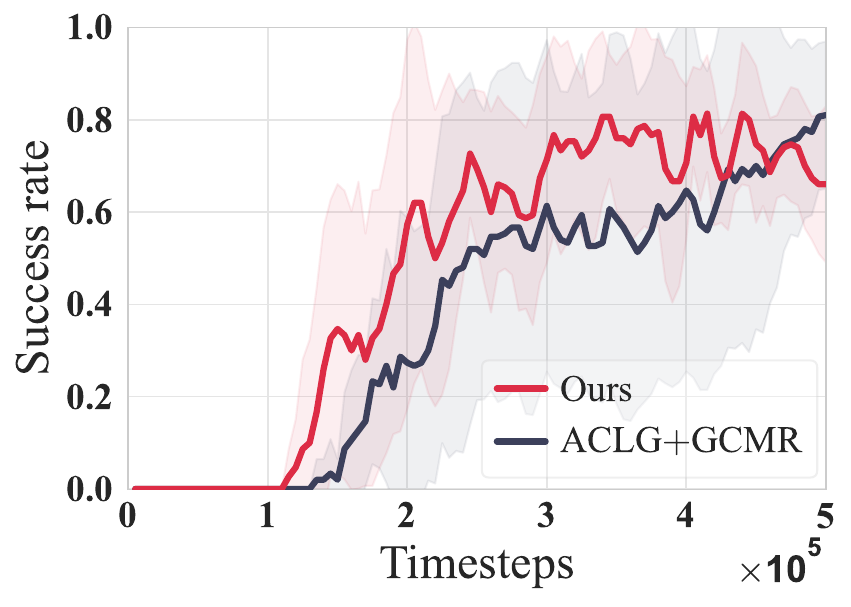}
\label{fig:antpush_gcmr}} \hspace*{-0.46em}
\subfloat[Ant Maze (U-shape)]{\includegraphics[width=0.34\textwidth]{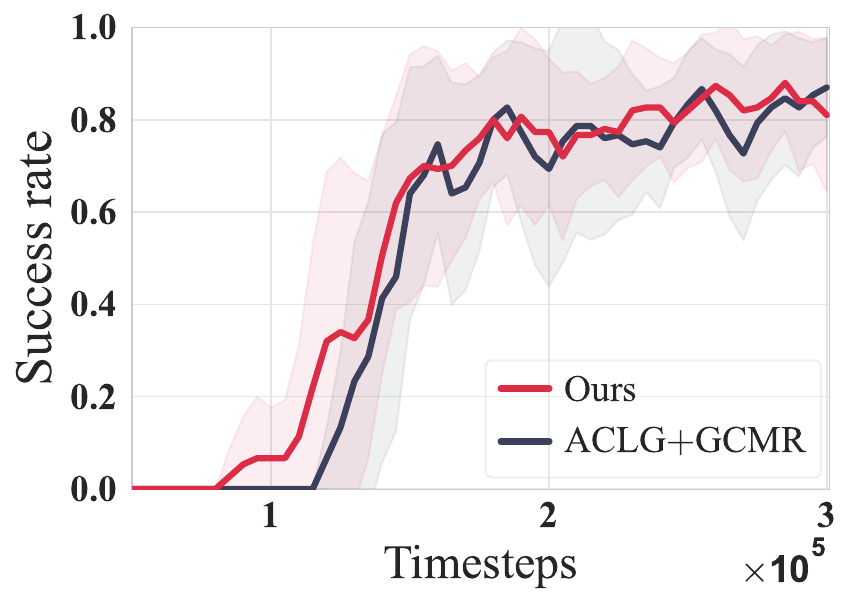} 
\label{fig:antmaze_multi_com}} \\ \vspace*{-0.6em}
\subfloat[Large Ant Maze (U-shape)]{\includegraphics[width=0.34\textwidth]{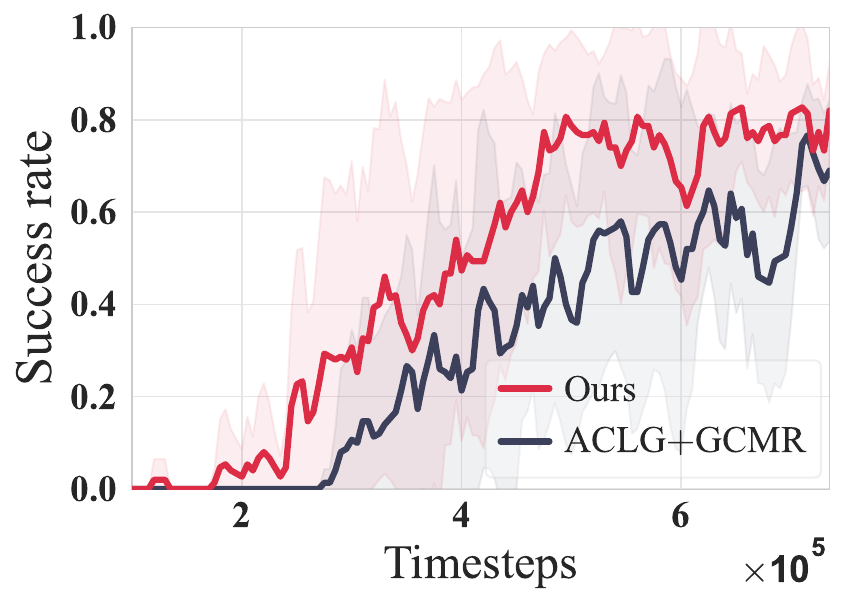} 
\label{fig:large_antmaze_multi_com}} \hspace*{-0.46em}
\subfloat[FetchPush]{\includegraphics[width=0.34\textwidth]{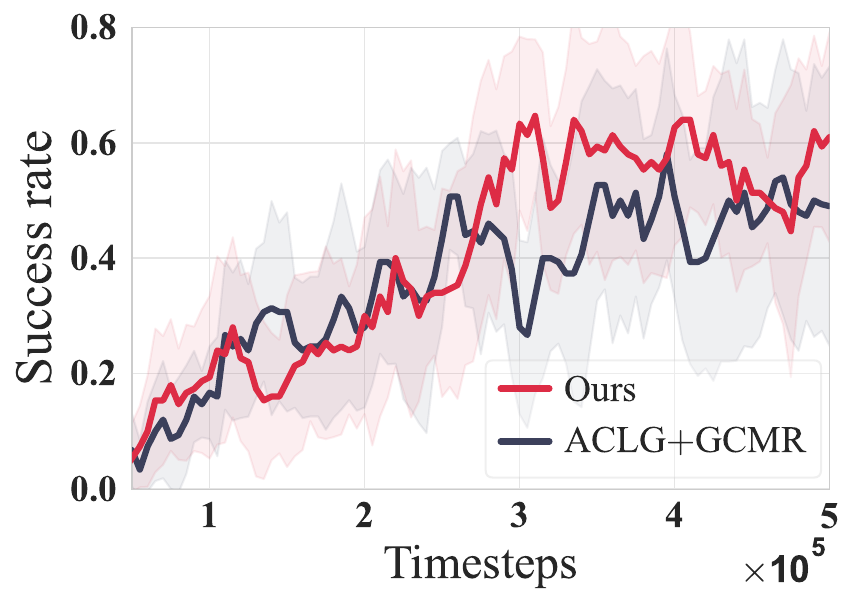}
\label{fig:pointmaze_multi_com}} \hspace*{-0.46em}
\subfloat[Pusher]{\includegraphics[width=0.34\textwidth]{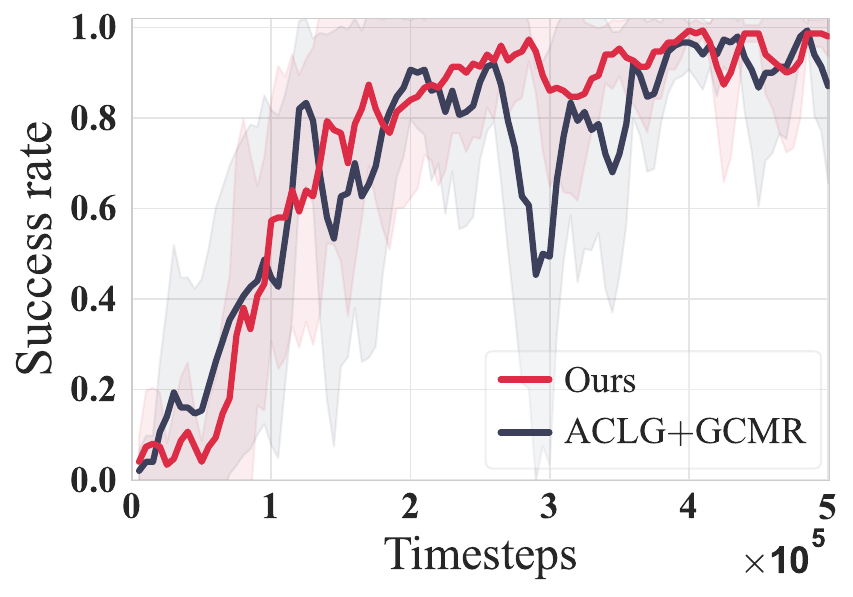} 
\label{fig:pusher_multi_com}} \hspace*{-0.46em}
\caption{Average success rate of the proposed ACLG+HG2P and the the prior state-of-the-art methods, primarily ACLG+GCMR\cite{wang2024guided}, in various continuous control tasks. The solid lines represent the mean across five runs, while the shaded regions indicate the standard deviation. Note: We reuse the experimental results pertaining to the ACLG+GCMR reported in the previous work \cite{wang2024guided}, as we employ the same random seeds (set from 0 to 5 for all tasks) and runtime environments. }
\label{fig:compare_others}
\end{figure*}
\subsection{Comparative Experiments}
To validate the effectiveness of the proposed HG2P, we integrate it with ACLG \cite{wang2024guided}, the disentangled variant of HIGL \cite{kim2021landmark}, and then compare the performance of the integrated framework ACLG+HG2P with that of the prior state-of-the-art methods. In Fig.~\ref{fig:compare_others}, we evaluate our methods against the latest state-of-the-art algorithm ACLG+GCMR\cite{wang2024guided}. Besides, we also consider two baselines: stable subgoal representation learning (HESS) \cite{li2021active} and reparameterized policy gradient (RPG)\cite{huang2023reparameterized}, both of which have been reported to achieve significant performance improvements in the Ant Push task. As shown in Fig.~\ref{fig:compare_others}\protect\subref{fig:antpush_hess} and \protect\subref{fig:antpush_gcmr}, in the Ant Push task, ACLG-based methods demonstrate significantly higher learning efficiency compared to the previous state-of-the-art methods, HESS and RPG, although the latter two exhibit better asymptotic performance. Moreover, as shown in Fig.~\ref{fig:compare_others}\protect\subref{fig:antpush_gcmr}-\protect\subref{fig:pusher_multi_com}, the proposed method also outperforms the prior state-of-the-art ACLG+GCMR in most hard-exploration tasks. We understand that such performance gain primarily stems from the more efficient Hippocampus-inspired high-reward sampling mechanism. We observe that our method is particularly effective in the earlier stages. In fact, existing methods used a first-in-first-out (FIFO) replay buffer, which ultimately leads to high-return experiences dominating the buffer in later stages.

\subsubsection{Comparison of Different Sampling Strategies: the High-Reward (HR), TopK, and Uniform.}

\begin{figure*}[h]
\captionsetup[subfloat]{format=hang, justification=centering}
\centering
\subfloat[Ant Push]{\includegraphics[width=0.4\textwidth]{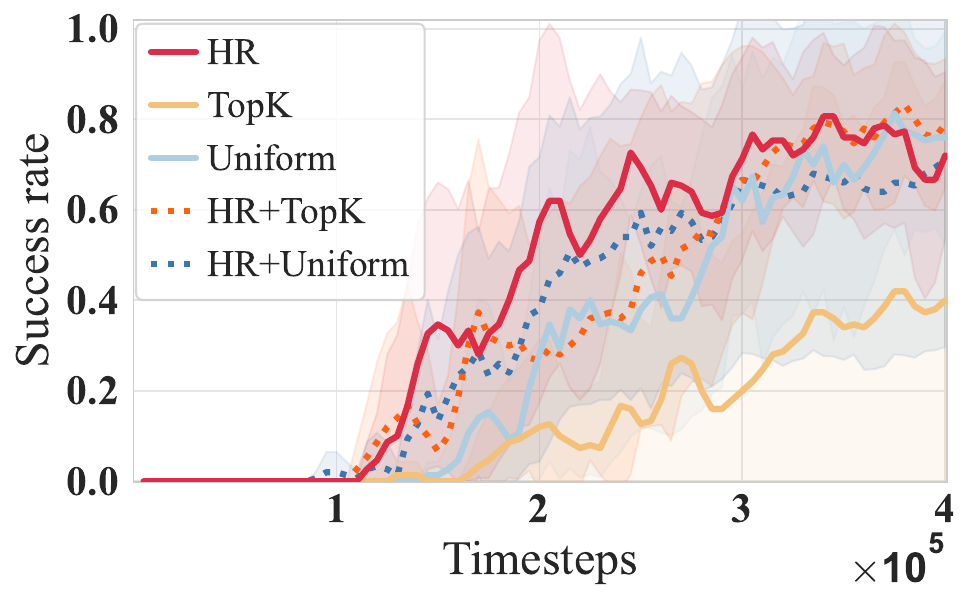}
\label{fig:antpush_smm}} \hspace*{-0.5em} 
\subfloat[HR ($\alpha=0.1$)]{\includegraphics[width=0.3\textwidth]{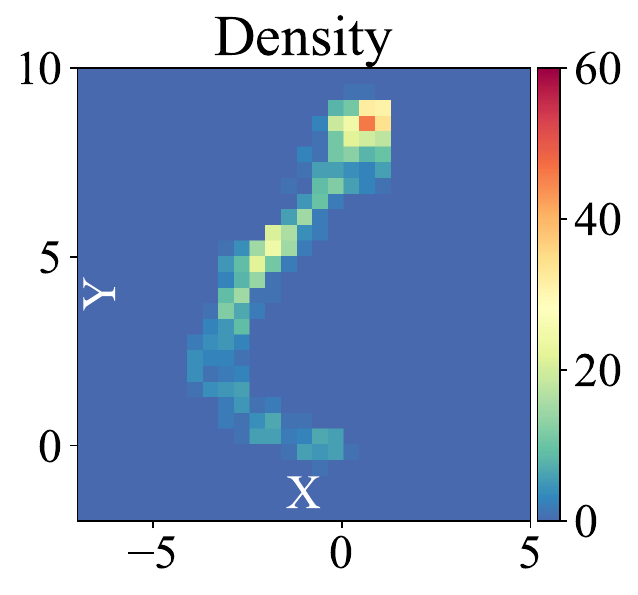}
\label{fig:antpush_smm_d1}} \hspace*{-0.5em} 
\subfloat[Uniform]{\includegraphics[width=0.3\textwidth]{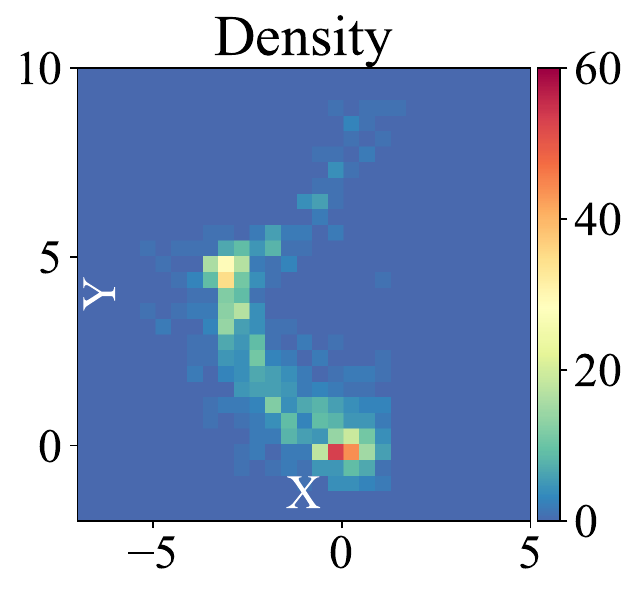}
\label{fig:antpush_smm_d2}} \hspace*{-0.8em}\\ \vspace*{-0.6em}
\subfloat[Ant Maze (U-shape)]{\includegraphics[width=0.4\textwidth]{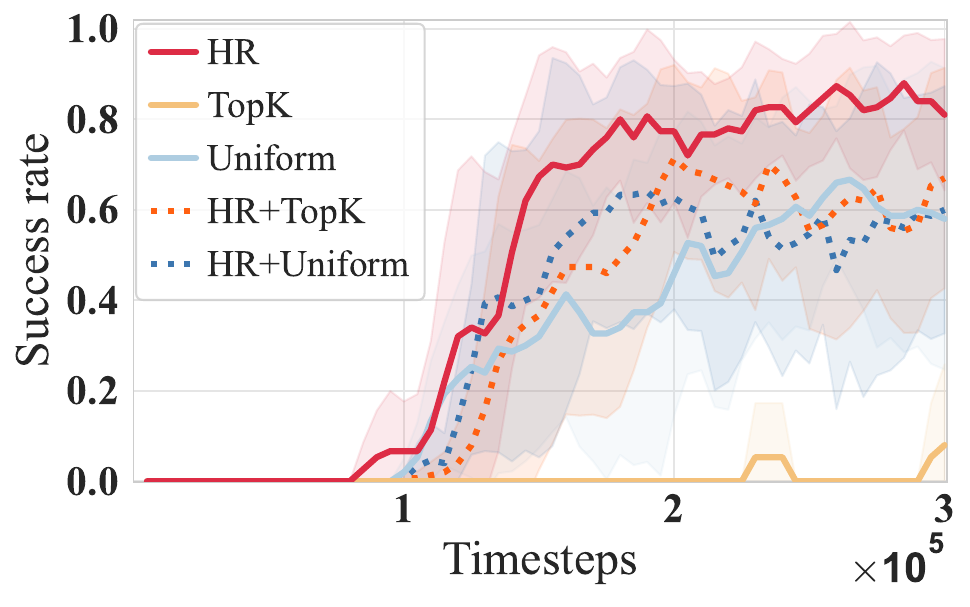}
\label{fig:antmaze_smm}} \hspace*{-0.7em} 
\subfloat[HR ($\alpha=0.1$)]{\includegraphics[width=0.3\textwidth]{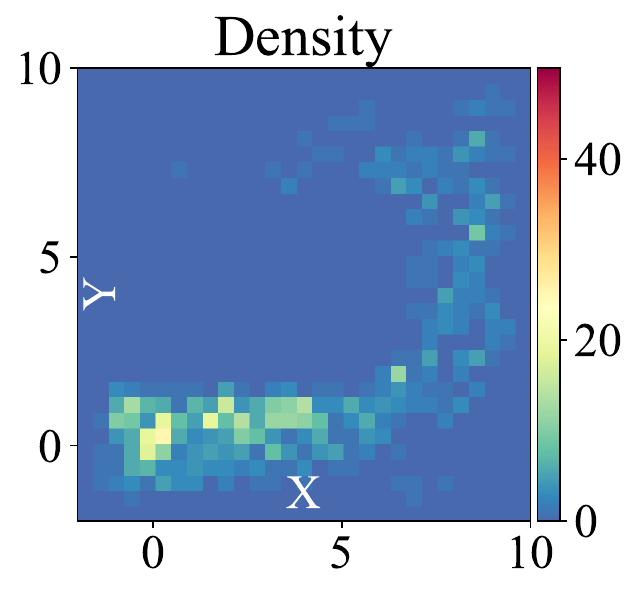}
\label{fig:antmaze_smm_d1}} \hspace*{-0.6em} 
\subfloat[Uniform]{\includegraphics[width=0.3\textwidth]{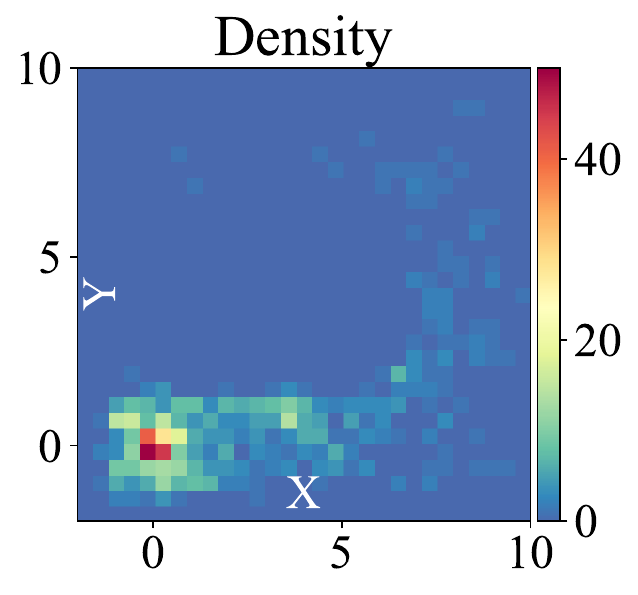}
\label{fig:antmaze_smm_d2}} \hspace*{-0.8em}
\caption{The first column illustrates the performance of our method with different sampling strategies in the \protect\subref{fig:antpush_smm} Ant Push and \protect\subref{fig:antmaze_smm} Ant Maze (U-shape) tasks. The second and third columns display the density (position-frequency) plots of sampled states using HR and uniform sampling strategies, respectively.}
\label{fig:compare_smm}
\end{figure*}

We compare the Hippocampus-inspired high-reward sampling against uniform sampling (denoted as Uniform) and percentage filtering \cite{chen2021decision} (denoted as TopK). In TopK sampling, episodes are sorted by their rewards, and the \textit{top-k} are selected for sampling. As shown in Fig.~\ref{fig:compare_smm}, our HR sampling strategy outperforms other sampling strategies, demonstrating greater improvement in both the Ant Push (Fig.~\ref{fig:compare_smm}\protect\subref{fig:antpush_smm}) and Ant Maze (U-shape) (Fig.~\ref{fig:compare_smm}\protect\subref{fig:antmaze_smm}) tasks. Interestingly, when we use mixed sampling methods, combining HR with uniform sampling (HR+Uniform) or HR with TopK (HR+TopK), the performance is still behind that of HR alone. This may be due to the introduction of low-return episodes. In Fig.~\ref{fig:compare_smm}\protect\subref{fig:antpush_smm_d1}, \protect\subref{fig:antpush_smm_d2}, \protect\subref{fig:antmaze_smm_d1}, and \protect\subref{fig:antmaze_smm_d2}, we plot the density maps (position-frequency) of visited states sampled by HR and uniform sampling, respectively. It is clear that uniform sampling tends to concentrate more around the starting point (the most frequently visited states), whereas HR sampling promotes a more even distribution across farther regions. In episodic learning scenarios, similar to how slime mold navigates a maze, HR sampling focuses on exploring high-reward paths once a high-reward point is found, rather than dispersing and branching. We understand that such differences in the sampling mechanism contribute significantly to the performance gains over other baselines.

\subsubsection{Comparison of the model-free Q-gradient penalty (MF-GP) and model-based Q-gradient penalty (MB-GP).}

\begin{figure*}[htbp]
\captionsetup[subfloat]{format=hang, justification=centering}
\centering
\subfloat[Ant Push]{\includegraphics[width=0.51\textwidth]{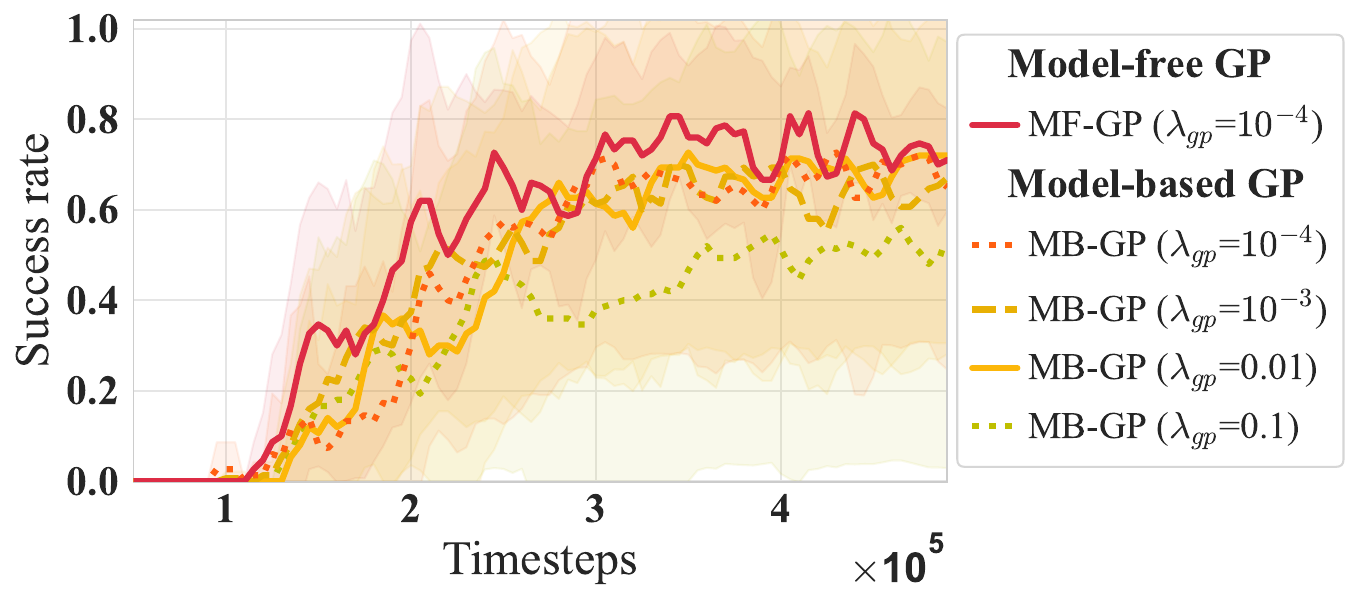}
\label{fig:antpush_wa}} \hspace*{-0.7em}
\subfloat[Ant Maze (U-shape)]{\includegraphics[width=0.51\textwidth]{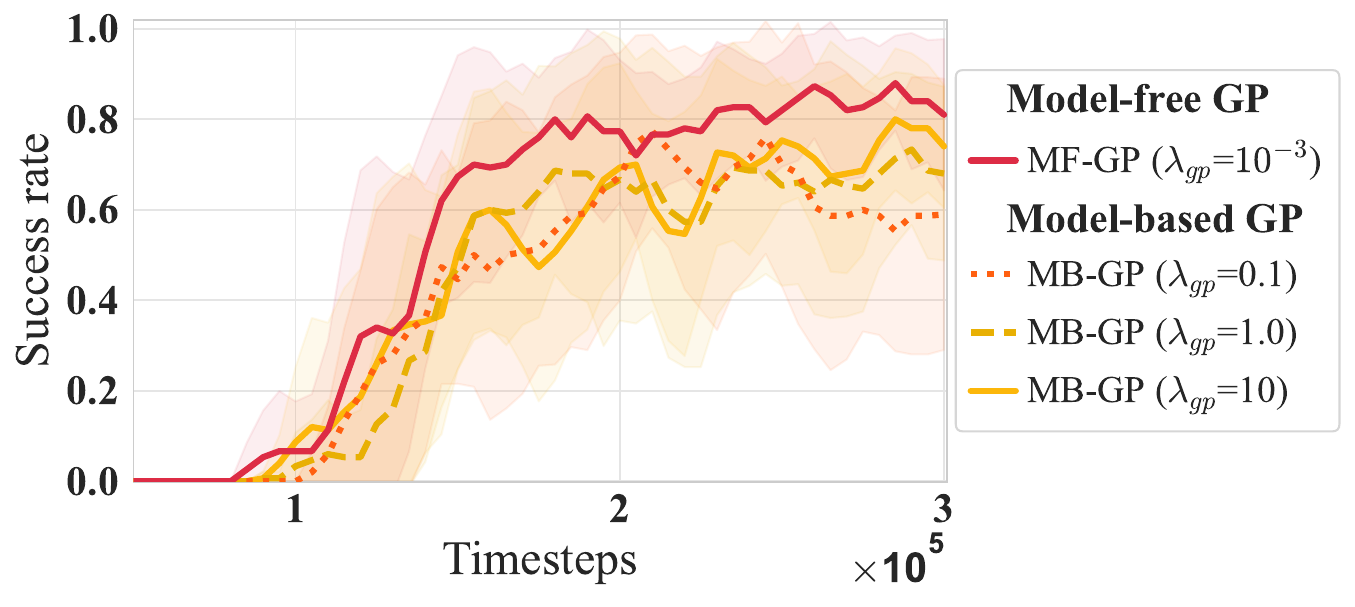}
\label{fig:}}
\caption{Comparison to the model-based Q-gradient penalty (MB-GP) proposed in prior work \cite{wang2024guided}. Due to the difference in the scale of the gradient penalty loss, we evaluate the performance of MB-GP across different values of $\lambda_{gp}$.}
\label{fig:mb_mf_gp}
\end{figure*}

Local Lipschitzness, such as the model-based Q-gradient penalty (MB-GP) proposed by Wang et al. \cite{wang2024guided}, which derives an upper bound on reward gradients using a forward dynamics model, has proven effective for robust reinforcement learning.  Our work differs in that we estimate the upper bound with respect to states and rewards, eliminating the dependence on models. In Fig.~\ref{fig:mb_mf_gp}, we compare the proposed ACLG+HG2P framework, using the new model-free gradient penalty (MF-GP), to its variant with MB-GP. As shown in Fig.~\ref{fig:mb_mf_gp}, with careful tuning, MF-GP can achieve better asymptotic performance than MB-GP. This is because deriving the upper bound of reward gradients using forward dynamics models in complex dynamical environments is challenging.
\replaced{To better demonstrate the scalability and robustness afforded by the model-free nature, in \ref{appendix:comparison_gp}, we further compare MB-GP and MF-GP in more challenging scenarios, including larger maze environments, i.e., the Large Ant Maze (U-shape) with a 24$\times$24 size, and the Stochastic Ant Maze \cite{wang2024guided}. Experimental results show that in the Large Ant Maze, our proposed method demonstrates significant advantages and robustness. In contrast, MB-GP initially benefits from local Lipschitz constraints, but its performance progressively deteriorates as exploration deepens, due to inaccuracies in the learned dynamics model. In the Stochastic Ant Maze, MF-GP underperforms compared to MB-GP, as the MB-GP employs a bootstrapped ensemble of dynamics models to approximate the stochastic transition dynamics of the environment. In comparison, MF-GP is derived under deterministic dynamics assumptions, which limits its robustness in stochastic settings. Furthermore, compared to the proposed MF-GP, MB-GP has the following advantages:}{However, MB-GP has its own advantages:} first, it is less sensitive to the hyperparameter $\lambda_{gp}$; second, it requires less computational time than the proposed MF-GP. MB-GP directly estimates the upper bound using forward dynamics models, while MF-GP involves additional computations to infer the time-independent upper bounds $B^Q_{s}$ and $B^Q_{sg}$ (see Eqs. \eqref{all_mf_gp_eq}). The time-consuming comparison between MF-GP and MB-GP can be found in Fig.~\ref{fig:compare_time}.

\subsection{Ablation Study}

\begin{figure*}[h]
\captionsetup[subfloat]{format=hang, justification=centering}
\centering
\subfloat[Ant Push]{\includegraphics[width=0.43\textwidth]{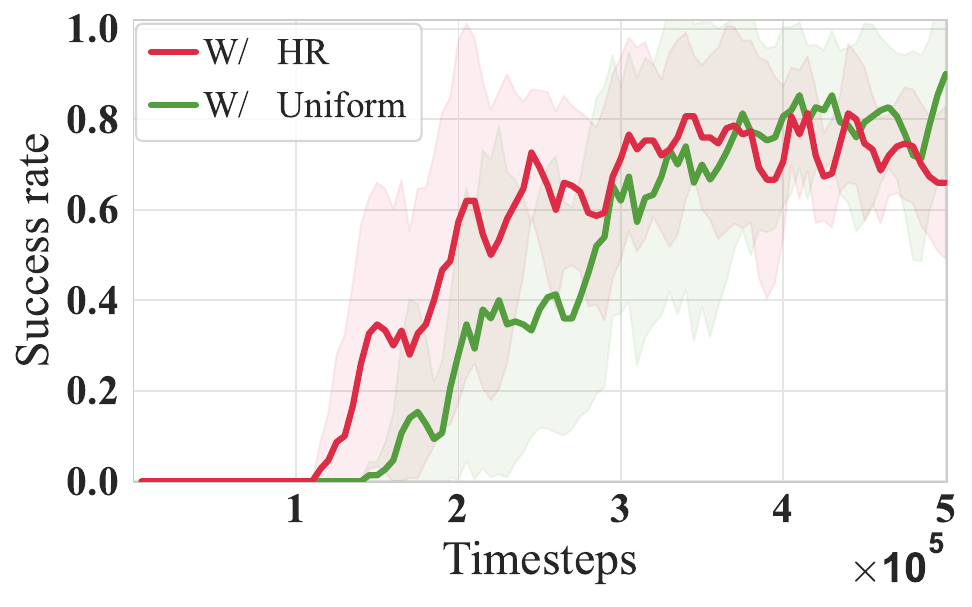}
\label{fig:antpush_abs_smm}} \hspace*{-0.7em}
\subfloat[Ant Maze (U-shape)]{\includegraphics[width=0.43\textwidth]{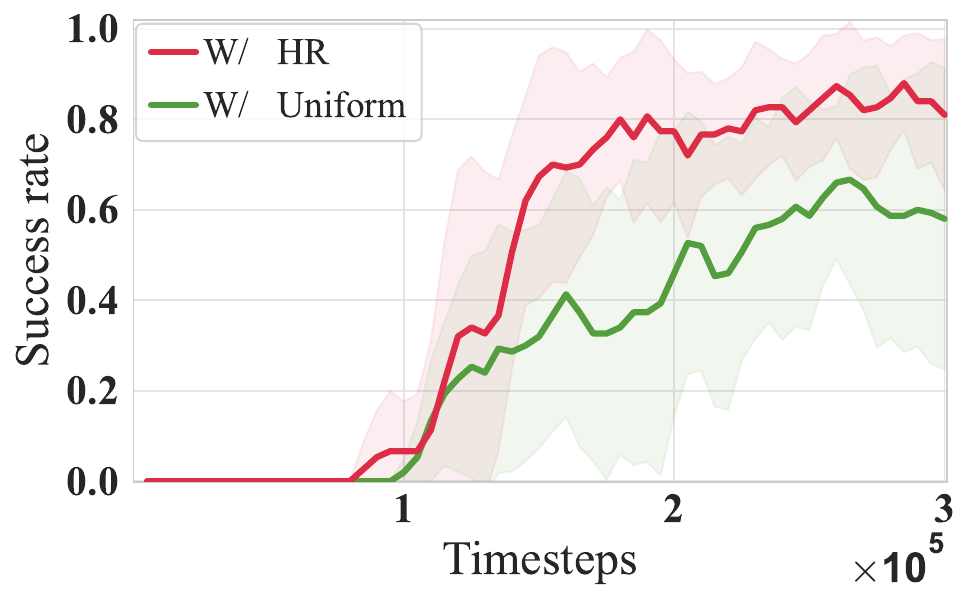}
\label{fig:antmaze_abs_smm}} \\ \vspace*{-0.6em}
\subfloat[Ant Push]{\includegraphics[width=0.43\textwidth]{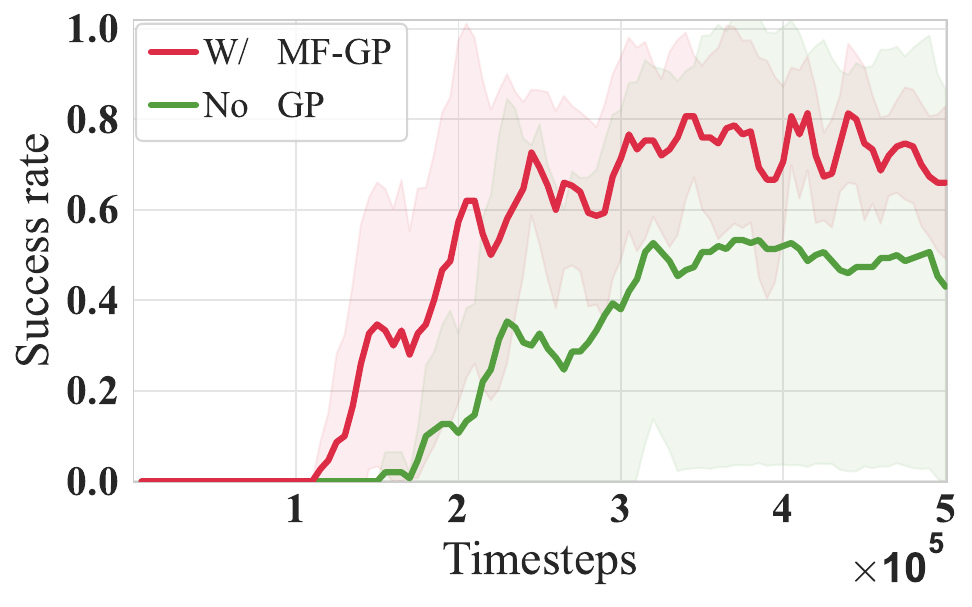}
\label{fig:antpush_abs_gp}} \hspace*{-0.7em}
\subfloat[Ant Maze (U-shape)]{\includegraphics[width=0.43\textwidth]{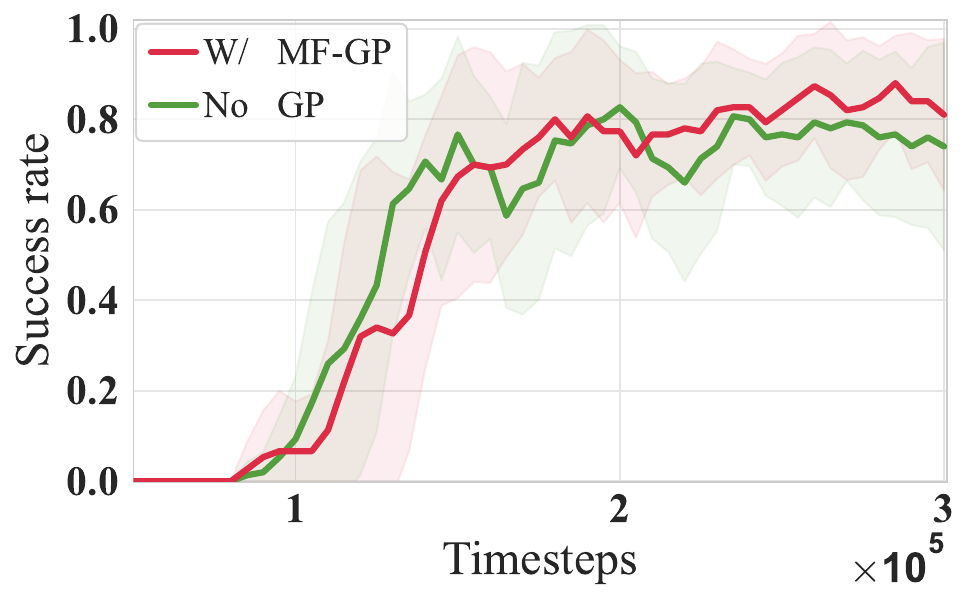}
\label{fig:antmaze_abs_gp}}
\caption{Ablation studies on two crucial components: high-reward (HR) sampling (first row) and model-free Q-gradient penalty (MF-GP) (second row). Here, we compare HR sampling with commonly used random sampling without sample weighting.}
\label{fig:abs_smm_gp}
\end{figure*}

To understand the importance and effectiveness of the newly introduced components in the proposed ACLG+HG2P framework, we conduct ablation studies on two crucial extensions: the high-reward (HR) sampling and model-free Q-gradient penalty (MF-GP), separately.

In the ablation studies of high-reward (HR) sampling, we compare the proposed ACLG+HG2P framework with HR sampling to its variant that employs uniform sampling for selected landmarks. The results are illustrated by the curves in Fig.~\ref{fig:abs_smm_gp}\protect\subref{fig:antpush_abs_smm} and \protect\subref{fig:antmaze_abs_smm}. The experimental results demonstrate a substantial advantage of high-reward sampling over uniform sampling. As discussed earlier, Fig.~\ref{fig:compare_smm} clearly illustrates how HR sampling facilitates more efficient utilization of past experiences, avoiding excessive focus on local regions in uniform sampling.
\replaced{To better highlight the superiority of HR sampling over uniform sampling, in \ref{embossed_u_maze}, we perform additional ablation experiments using a newly designed environment, the Embossed Point Maze, which is deliberately constructed with a local optimal trap. Under the sparse reward setting, HR sampling achieves a 100\% success rate, significantly outperforming uniform sampling, which yields only a 40\% success rate. This demonstrates the ability of HR sampling to effectively escape local optima by efficiently sampling and exploiting high-return trajectories. Under the dense reward setting, HR sampling performs comparably to uniform sampling, achieving performance improvement slightly earlier during training. Collectively, these results confirm}{
Therefore,} our HR approach significantly enhances the learning efficiency compared to uniform sampling.

In the ablation studies of model-free Q-gradient penalty (MF-GP), we compare with a version of the proposed framework without the MF-GP. As shown in Fig.~\ref{fig:abs_smm_gp}\protect\subref{fig:antpush_abs_gp} and \protect\subref{fig:antmaze_abs_gp}, the use of MF-GP significantly enhances learning efficiency in the challenging exploration task (i.e., Ant Push). For the Ant Maze (U-shape), both have similar asymptotic performance, although ours demonstrates a slight advantage. This may be because the benchmark is nearing or has reached saturation, where further improvements are limited. Overall, the MF-GP makes valuable contributions to robust reinforcement learning, yielding substantial performance gains.

\section{Conclusion}
\label{sec:cfw}
This study presents a novel goal-conditioned hierarchical reinforcement learning (HRL) framework called ACLG+HG2P, which incorporates the proposed HG2P into ACLG \cite{wang2024guided} to establish a new state-of-the-art HRL algorithm. The HG2P mainly consists of two crucial extensions: the Hippocampus-inspired high-reward sampling for constructing a more efficient memory graph and the model-free gradient penalty for improved robustness, eliminating the dependence on models in previous work \cite{wang2024guided}. Extensive experimental results indicate that the two extensions achieve the expected effects, significantly enhancing the performance of the HRL framework. Meanwhile, the qualitative analysis in Fig.~\ref{fig:compare_smm} elucidates why high-reward sampling is advantageous: it promotes a more even distribution across farther regions. In episodic learning scenarios, high-reward sampling emphasizes exploration around high-reward paths once high-reward points are found, rather than branching out, similar to how slime mold navigates a maze \cite{nakagaki2000maze}.

\section{\added{Limitations and Future Work}}
\label{sec:limit_cfw}
\begin{figure*}[h]
\captionsetup[subfloat]{format=hang, justification=centering}
\centering
\subfloat[\added{Overall time consumption comparison}]{\includegraphics[width=0.5\textwidth]{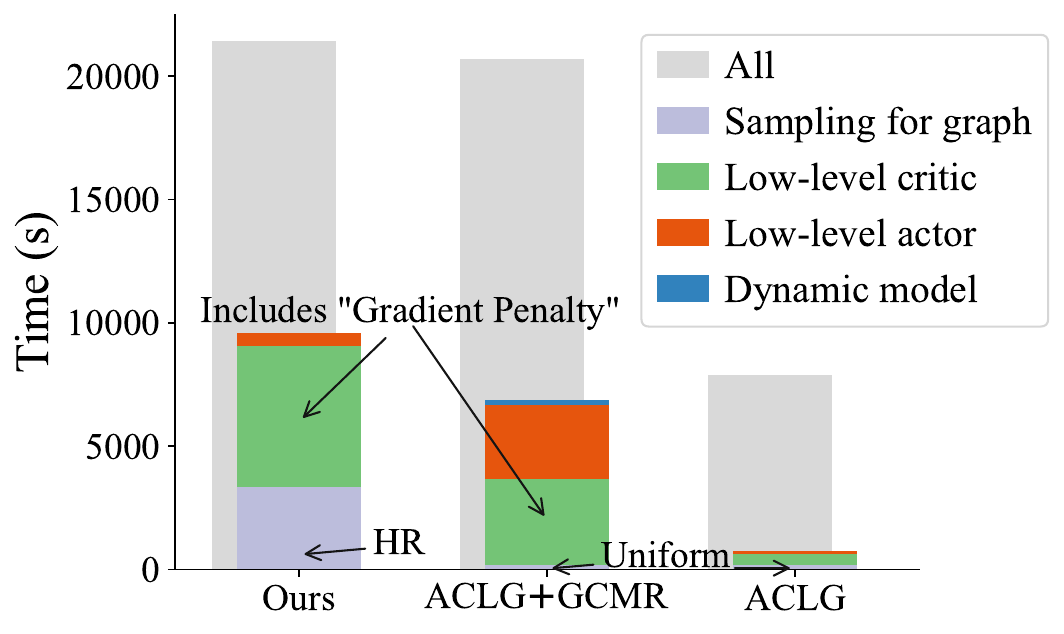}
\label{fig:overall_time_compare}} \hspace*{-0.2em}
\subfloat[\added{Time comparison of sampling methods and gradient penalties}]{\includegraphics[width=0.43\textwidth]{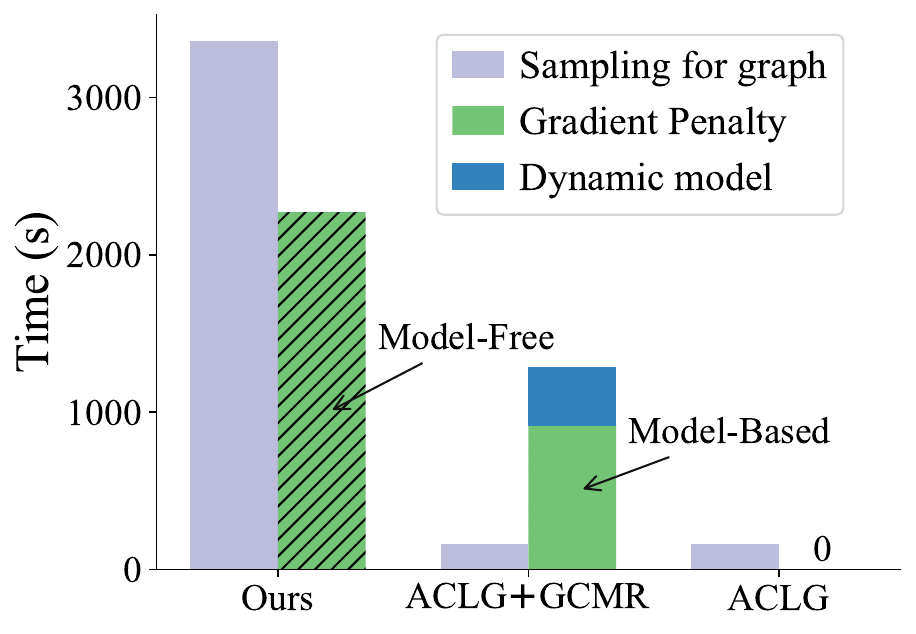}
\label{fig:detail_time_compare}}
\caption{We compare the time consumption of the proposed ACLG+HG2P, ACLG+GCMR, and ACLG, trained on the Ant Maze (U-shape) for $2 \times 10^5$ steps. The comparison includes total training time (gray), landmarks sampling time (purple), low-level critic training time (green), low-level actor training time (orange), and dynamic model training time (blue). Our method incurs extra computational cost due to HR sampling and the gradient penalty, while the GCMR primarily allocates time to model-based off-policy correction and the gradient penalty\added{, as shown in subfigure~\protect\subref{fig:overall_time_compare}}. \added{Further details are provided in subfigure~\protect\subref{fig:detail_time_compare}, which compares HR and uniform sampling methods, as well as model-based and model-free gradient penalties.}}
\label{fig:compare_time}
\end{figure*}

\replaced{Our method still has certain limitations. First,}{However,} as shown in Fig.~\ref{fig:compare_time}, such performance improvement comes at the expense of more computational cost. The quantitative analysis of computational cost shows that both high-reward sampling and the gradient penalty incur significant expenses. It is worth noting that these additional time-consuming factors occur only during the training stage and do not affect the response speed during application.\deleted{ Besides, our study still has some limitations.}
\replaced{Second, HR sampling is particularly effective in environments with sparse reward settings. However, in dense reward settings, there is a small chance of falling into local optima due to insufficient exploration. For instance, as discussed in \ref{embossed_u_maze}, the Embossed Point Maze under a dense reward setting presents a reward landscape with two peaks, one of which corresponds to a local optimum. Our experiments show that one out of five runs fails due to insufficient exploration, where the HR sampling becomes overly focused on the high-reward region and gets trapped in the local optimum. Third, as observed in Fig.~\ref{fig:compare_others}, performance degradation occurs during the later stages of training. In \ref{appendix:narrow_bottleneck}, we provide further analysis to explore the underlying cause: the narrow bottleneck of the high-reward region, resulting in too few landmarks at the corner. For practical use, we recommend dynamically adjusting the temperature parameter $\alpha$, for instance, by increasing $\alpha$ during the later stages of training to achieve a better balance between high-return and uniform sampling. Additionally, it is advisable to implement model checkpointing and an early stopping mechanism. Lastly, }{Firstly, }its applicability is primarily confined to navigational tasks, where the agent's spatial position is accurately known.\deleted{ Secondly, our method is more effective for episodic or semi-episodic learning but ineffective in fully random environments (with both random starts and random goals), potentially leading to a decrease in performance. This may be because we employed a simple trick to adapt high-reward sampling to the random distribution of both starts and goals, where we use ordinary least squares regression to estimate the expected return of each start-goal pair.} In future research, we \deleted{will explore more effective methods, such as deep neural networks, with stronger generalization capabilities to estimate the expected return of pairs. Additionally, we }will investigate the applications of the proposed framework in more general environments, beyond just navigational tasks.

\section*{Authorship contribution statement}
Haoran~Wang: Conceptualization, Methodology, Writing. Yaoru~Sun: Project administration, Supervision, Review. Zeshen~Tang: Investigation, Software, Drawing, Writing \& Editing. Haibo~Shi and Chenyuan~Jiao: Major Revision, including Additional Experiments, Data Analysis, and Statistical Analysis.

\section*{Declaration of competing interest}
The authors declare that they have no known competing financial interests or personal relationships that could have appeared to influence the work reported in this paper.

\appendix
\section{\added{Demonstrating the advantage of High-Return (HR) sampling: additional ablation experiments on the Embossed Point Maze}}
\label{embossed_u_maze}
\added{We further demonstrate the advantage of our high-return (HR) sampling method over the uniform sampling by applying it to a newly and specifically designed case. \textbf{Embossed Point Maze}: a simulated ball starts from the midpoint of the left side of the maze and navigates toward the target located at the midpoint of the right side. However, a $\sqsupset$-shaped (similar to the embossed structure of the maze) wall obstructs the direct path, causing the ball to become trapped in a local optimum.}
\begin{figure*}[htbp]
\centering
\includegraphics[width=0.8\textwidth]{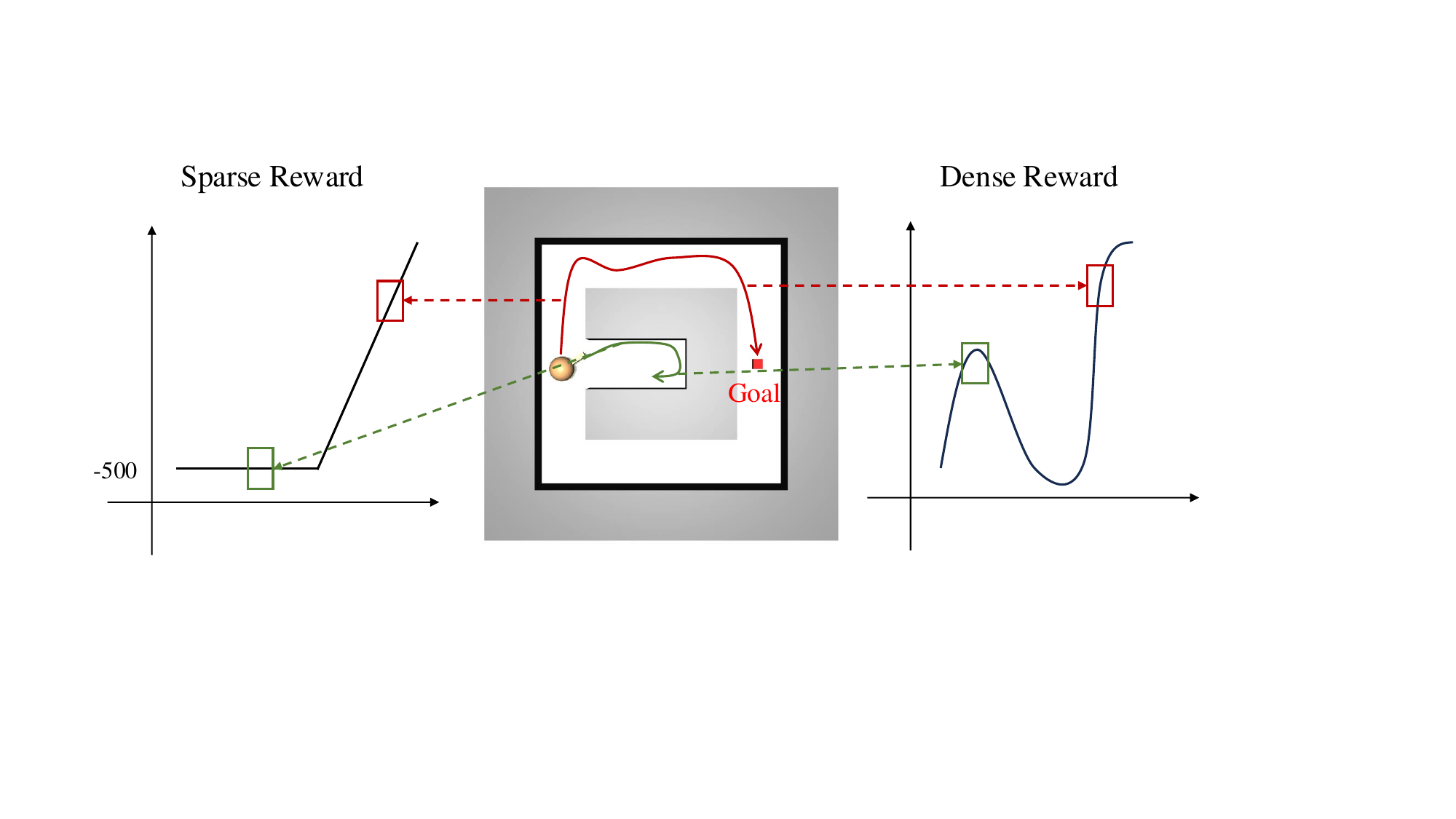}
\caption{\added{Illustration of the Embossed Point Maze, with the left and right sides corresponding to the reward landscapes under the sparse and dense reward settings, respectively.}}
\label{fig:env_embossed_u_maze}
\end{figure*}

\added{We perform ablation studies on the sampling strategy using the Embossed Point Maze. The experiments are conducted under both \textbf{sparse} and \textbf{dense} reward settings. In the \textbf{sparse} reward configuration, the agent is penalized with -1 for failing to reach the target and is granted a reward of 0 upon success. In the \textbf{dense} reward configuration, the agent incurs a penalty of the negative L2 distance to the target if it fails to reach the target and receives an auxiliary reward of 200 upon success. As a result, under the dense reward setting, the reward landscape exhibits two peaks, one of which corresponds to a local optimum, as depicted in Fig.~\ref{fig:env_embossed_u_maze}.}

\begin{figure*}[!h]
\captionsetup[subfloat]{format=hang, justification=centering}
\centering
\subfloat[Embossed Point Maze (\textbf{Sparse})]{\includegraphics[width=0.43\textwidth]{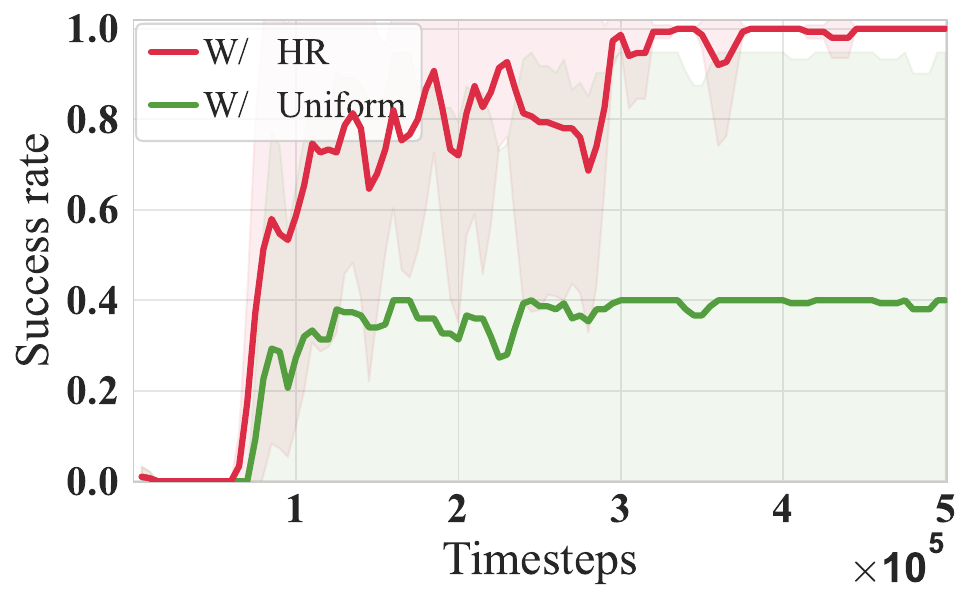}
\label{fig:env_embossed_u_maze_ablation_sparse}} \hspace*{-0.7em}
\subfloat[Embossed Point Maze (\textbf{Dense})]{\includegraphics[width=0.43\textwidth]{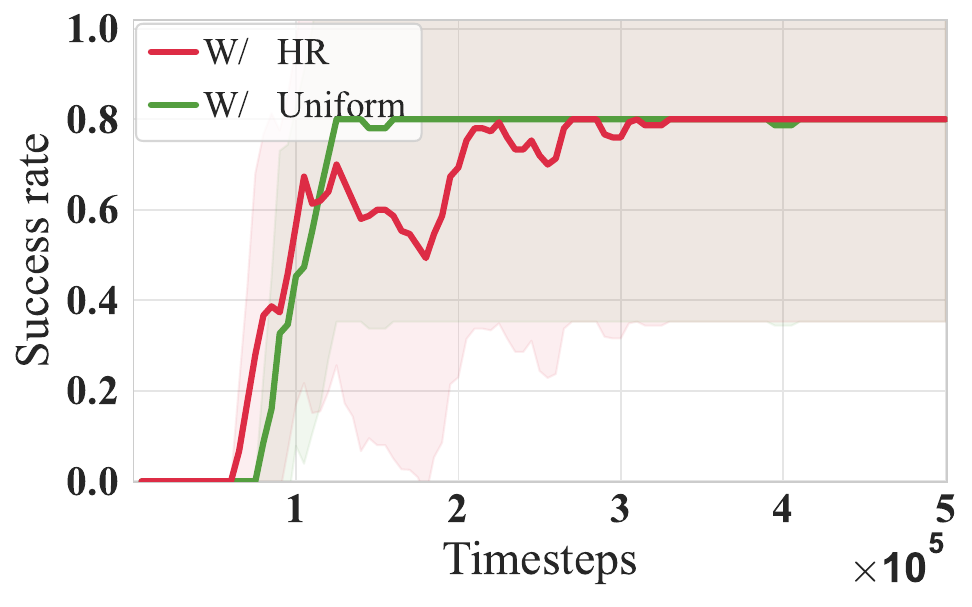}
\label{fig:env_embossed_u_maze_ablation_dense}} \\ \vspace*{-0.6em}
\caption{\added{Ablation studies on high-return (HR) sampling in the Embossed Point Maze under \protect\subref{fig:env_embossed_u_maze_ablation_sparse} sparse and \protect\subref{fig:env_embossed_u_maze_ablation_dense} dense reward settings. We compare the average success rate of the proposed framework with (W/) HR sampling and with (W/) uniform sampling without sample weighting.}}
\label{fig:env_embossed_u_maze_ablation}
\end{figure*}

\begin{figure*}[!h]
\captionsetup[subfloat]{format=hang, justification=centering}
\centering
\subfloat[\textbf{HR} sampling on Embossed Point Maze (\textbf{Sparse})]{\includegraphics[width=0.32\textwidth]{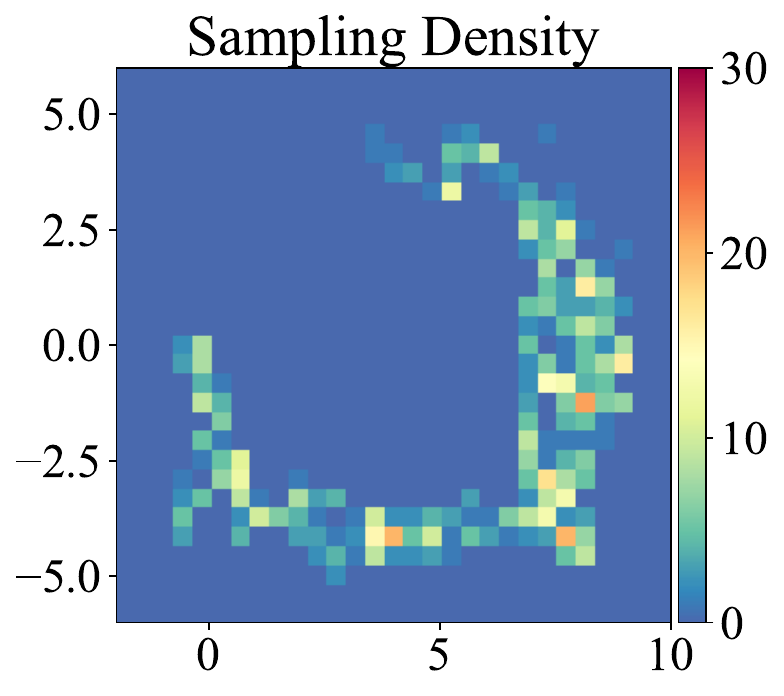}
\label{fig:env_embossed_u_maze_sampling_density_hr_sparse}} \hspace*{-0.6em}
\subfloat[\textbf{Uniform} sampling on Embossed Point Maze (\textbf{Sparse})]{\includegraphics[width=0.32\textwidth]{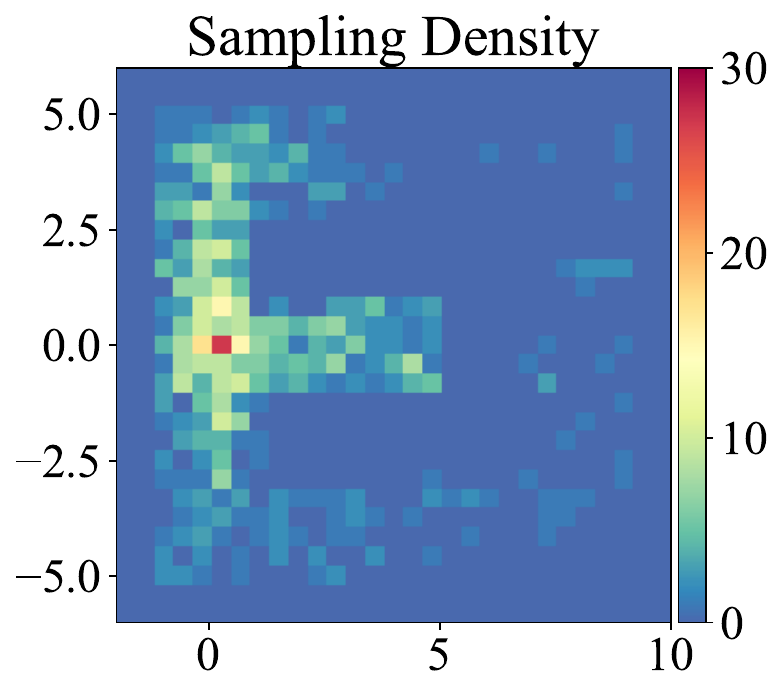}
\label{fig:env_embossed_u_maze_sampling_density_uniform_sparse}} \hspace*{-0.6em}
\subfloat[\textbf{HR} sampling on Embossed Point Maze (\textbf{Dense})]{\includegraphics[width=0.32\textwidth]{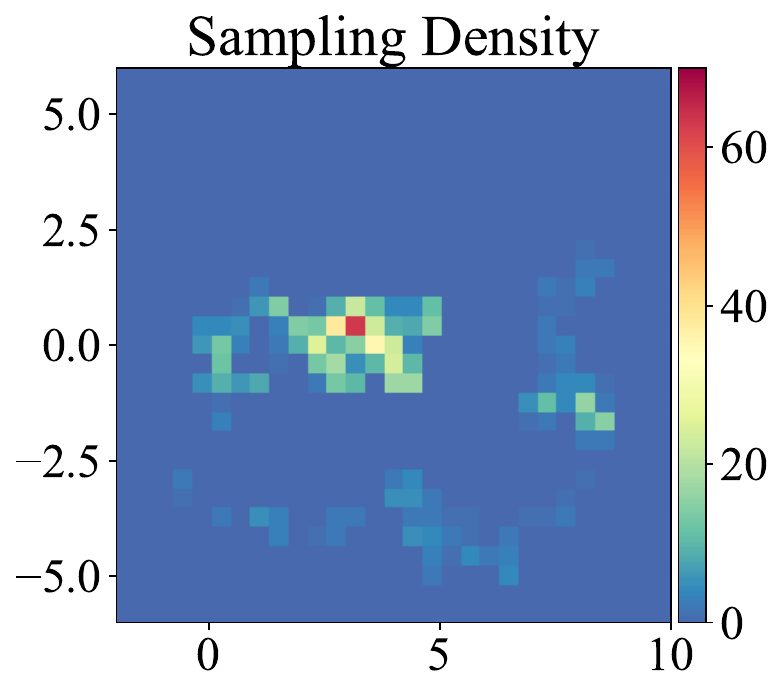}
\label{fig:env_embossed_u_maze_sampling_density_hr_dense}}
\caption{\added{Subfigures \protect\subref{fig:env_embossed_u_maze_sampling_density_hr_sparse} and \protect\subref{fig:env_embossed_u_maze_sampling_density_uniform_sparse} depict the density of sampled states at 0.08M steps under the sparse reward setting using HR and uniform sampling strategies, respectively. Subfigure \protect\subref{fig:env_embossed_u_maze_sampling_density_hr_dense} presents the state density under the dense reward setting using HR sampling. Note that the state distribution of uniform sampling is independent of the reward setting.}}
\label{fig:env_embossed_u_maze_sampling_density}
\end{figure*}
\added{We compare the proposed framework with high-return (HR) sampling to its variant that employs uniform sampling for selected landmarks, under the sparse and dense reward settings, respectively. The results are illustrated in Fig.~\ref{fig:env_embossed_u_maze_ablation}\protect\subref{fig:env_embossed_u_maze_ablation_sparse} and \protect\subref{fig:env_embossed_u_maze_ablation_dense}. It can be observed that HR sampling is more suitable for the sparse reward setting, achieving a 100\% success rate, significantly outperforming uniform sampling. However, it is worth noting that under the dense reward setting, HR sampling still tends to fall into local optima. In one out of five runs, the agent fails due to \textit{insufficient exploration} and becomes trapped in a local optimum. To more intuitively demonstrate the advantage of high-return (HR) sampling, we visualize the density of sampled states using HR and uniform sampling strategies in Fig.~\ref{fig:env_embossed_u_maze_ablation}. As shown, HR sampling focuses on exploiting high-reward trajectories once they are discovered, whereas uniform sampling exhibits a more indiscriminate and less efficient pattern. But, as illustrated in Fig.~\ref{fig:env_embossed_u_maze_sampling_density}\protect\subref{fig:env_embossed_u_maze_sampling_density_hr_dense}, when the reward landscape contains multiple peaks and local optima, insufficient exploration can cause HR sampling to overly focus on local traps.}

\section{\added{Motivation of sampling subgoal from high-return trajectories}}
\label{motivation_sampling}
\added{
We start by considering a special case where $h=1$, i.e., the higher-level policy generates a subgoal at every 1 time step interval, while the lower-level policy is optimal such that it always obtains the maximum rewards, i.e., $||\eta\varphi(s_{t+1}) - sg_{t+1}||_2=0$. Furthermore, by making the strong assumption that the MDP is deterministic and that both the initial state and the final goal follow a Dirac delta distribution, we can obtain the target policy:
\begin{equation}
\begin{aligned}
\pi^{(*)} \leftarrow \arg\max_{\pi^{(i)}} \Big[ \sum^{\infty}_{t=0} \gamma^t R^{(i)}_{t} \Big| S_0 &= \{s_0\}, \mathcal{G} = \{g\}, a_t \sim \pi^{(i)}(s_t, g), \\
& s_{t+1} \sim \mathcal{P}\left(s_{t+1}|s_t, a_{t}\right) \Big]
\end{aligned}
\end{equation}
\replaced{w}{W}here $\pi^{(i)}$ is any policy. Therefore, given that a set of trajectories will be collected and stored in a replay buffer, with each trajectory $\tau^{(i)}=\{s^{(i)}_{0}, s^{(i)}_{1}, s^{(i)}_{2}, \dots\}$ being collected by different behavior policy $\pi^{(i)}$. As a result, we find the target higher-level policy by imitating demonstrations generated from several past near-optimal policies:
\begin{equation}
\begin{aligned}
\log \pi^{(*)}&(sg_{t}|s_{t}, g; \theta^h) \varpropto \\ &-\sum_{(i)}\sum^{\infty}_{t=0} w^{(i)}\big\Vert\underbrace{\big[\varphi(s^{(i)}_{t+1}) - \eta\varphi(s^{(i)}_{t})\big]}_{\text{demonstrations}} - \underbrace{\pi(s_{t}, g; \theta^h)}_{\text{policy}}\big\Vert_2 + {\rm const} \\
&\text{s.t.}\qquad \max_{\mathcal{W}} \sum_{(i)} w^{(i)}\sum^{\infty}_{t=0}\gamma^t R^{(i)}_{t}
\label{imitating_policies}
\end{aligned}
\end{equation}
\replaced{w}{W}here $w^{(i)}$ is the unnormalized weight assigned to the trajectory $\tau^{(i)}$, and $\mathcal{W} := \{ w^{(0)}, w^{(1)}, w^{(2)}, \dots \}$. Here, the Equation \ref{imitating_policies} is derived under the assumption that the lower-level policy always completes the subgoal, i.e., $||\eta\varphi(s_{t+1}) - sg_{t+1}||_2=0$. This implies that we can require the higher-level policy to generate subgoals that follow high-return trajectories, thereby inducing the low-level policy to reproduce these high-return trajectories. This assumption can be ensured in most goal-conditioned HRL settings, where the lower-level policy is rewarded in the form of such an L2-norm distance. Besides, in the general case where $h > 1$, we can view it as mimicking highly abstract anchors of high-return trajectories.
}

\added{
Inspired by Equation \ref{imitating_policies}, it tells us imitating high-return trajectories is \textit{a sufficient condition} for obtaining the target
higher-level policy. Now, let us recall the second loss term in Equation \ref{aclg_loss}: $||sg_{t}^{\rm pseudo} - \pi(s_t, g;\theta^h) ||^2_2$, and compare it with the $\big\Vert\big[\varphi(s^{(i)}_{t+1}) - \eta\varphi(s^{(i)}_{t})\big] - \pi(s_{t}, g; \theta^h)\big\Vert_2$ in Equation \ref{imitating_policies}. Although exploring the relationship between planned graph-based waypoints and high-return trajectories is intricate, \textit{sampling $sg^{\rm plan}_t$} (the original form of the pseudo-landmark $sg_{t}^{\rm pseudo}$) \textit{from high-return trajectories holds the potential for further policy improvement}.
}

\section{\added{Derivation of the transition distribution}}
\added{
\label{derivation_transition_distribution}
For such a Boltzmann distribution in Equation \ref{traj_weighting}, the solution for the weights $\mathcal{W}$ is:
\begin{equation}
w^{(i)}_t \doteq w^{(i)} = \exp \left(\frac{1}{\alpha}\sum^{\infty}_{t=0}\gamma^t R^{(i)}_{t} -1\right)
\label{weighting_distribution}
\end{equation}
Here, to find the optimal weights, we take the derivative of function $f_{\mathcal{W}}: \sum_{(i)} w^{(i)} \sum^{\infty}_{t=0}\gamma^t R^{(i)}_{t} - \alpha \sum_{(i)} w^{(i)}\log w^{(i)}$ with respect to $w^{(i)}$ and set it to zero: $\frac{\partial f_{\mathcal{W}}}{\partial w^{(i)}} = \sum^{\infty}_{t=0} \gamma^t R^{(i)}_{t} -\alpha(1+\log w^{(i)}) = 0$. Then, we obtain the stated result shown in Equation \ref{weighting_distribution}.
Next, we further normalize the weights by enforcing the constraint that the weights sum to 1 and get a new transition distribution:
\begin{equation}
w^{(i)}_t \leftarrow \frac{\exp \left(\frac{1}{\alpha}\sum^{\infty}_{t=0}\gamma^t R^{(i)}_{t} \right)}{\sum_{(i)} T_i\exp \left(\frac{1}{\alpha}\sum^{\infty}_{t=0}\gamma^t R^{(i)}_{t}\right)} \Leftrightarrow \frac{\exp \left(\frac{1}{\alpha}\sum^{\infty}_{t=0}\gamma^t R^{(i)}_{t} -1\right)}{\sum_{(i)}T_i\exp \left(\frac{1}{\alpha}\sum^{\infty}_{t=0}\gamma^t R^{(i)}_{t} -1\right)}
\label{normalized_weighting_distribution}
\end{equation}
\replaced{w}{W}here $T_i$ denotes the length of trajectory $\tau^{(i)}$.
}
\section{\added{Proof of Corollary 1}}
\label{proof_corollary1}
\setcounter{corollary}{0}
\added{
\begin{corollary}[Lower-level Q-function Lipschitzness]
Given that, in usual goal-conditioned hierarchical reinforcement learning, the internal reward received by the lower-level agent is time-independent quantities defined as: $r^l_t=-\lVert sg_{t+1} - \eta \varphi(s_{t+1}) \rVert_2$. By applying such a preconditioner to the reward, a relatively relaxed upper bound can be obtained:
\begin{equation}
\Vert \nabla_{s_t}Q(s_t,a_t) \Vert_F \leq \frac{\sqrt{N_s}}{1-\gamma} \cdot \sup \left \{\Vert \nabla_{s_t} \pi_{\theta^h} - \eta\nabla_{s_t} \varphi (s_t ) \Vert_F\right \}
\end{equation}
\replaced{w}{W}here $\pi_{\theta^h}$ is the higher-level policy that produces a subgoal $sg_t \sim \pi_{\theta^h}$, $\varphi$ is a mapping function that maps a state to the goal space, and $\eta$ denotes a binary function with regard to the relative/absolute subgoal scheme.
\end{corollary}
}

\begin{proof}
\added{
\begin{equation}
\begin{aligned}
\Vert &\nabla_{s_t}Q(s_t,a_t) \Vert_F \\
&\leq \frac{\sqrt{N_s}}{1-\gamma L_{\mathcal{S}}} \cdot L_{r} \\
&= \frac{\sqrt{N_s}}{1-\gamma L_{\mathcal{S}}} \cdot \sup \left \{ \Vert \frac{\partial r(s_t, a_t)}{\partial{s_t}} \rVert_F\right \} \\
&= \frac{\sqrt{N_s}}{1-\gamma L_{\mathcal{S}}} \cdot \sup \left \{ \Vert \frac{-\partial\lVert sg_{t+1} - \eta \varphi(s_{t+1}) \rVert_2}{\partial{s_{t+1}}} \cdot \frac{\partial{s_{t+1}}}{\partial{s_{t}}} \rVert_F\right \} \\
&\leq \frac{\sqrt{N_s}}{1-\gamma L_{\mathcal{S}}} \cdot \sup \left \{ \Vert \frac{\partial\lVert sg_{t+1} - \eta \varphi(s_{t+1}) \rVert_2}{\partial{s_{t+1}}}\rVert_F\right \}\cdot \sup \left \{\lVert \frac{\partial{s_{t+1}}}{\partial{s_{t}}} \rVert_F\right \} \\
&= \frac{\sqrt{N_s}}{1-\gamma L_{\mathcal{S}}} \cdot \sup \left \{ \Vert \frac{ \partial\lVert sg_{t+1} - \eta \varphi(s_{t+1}) \rVert_2}{\partial{s_{t+1}}}\rVert_F\right \}\cdot L_{\mathcal{S}} \\
&= \frac{\sqrt{N_s}L_{\mathcal{S}}}{1-\gamma L_{\mathcal{S}}} \cdot \sup \left \{ \Vert \frac{(sg_{t+1} - \eta \varphi(s_{t+1}))(\nabla_{s_{t+1}} sg_{t+1} - \eta\nabla_{s_{t+1}} \varphi (s_{t+1}))}{\lVert sg_{t+1} - \eta \varphi(s_{t+1}) \rVert_2} \rVert_F\right \}\\
&\leq \frac{\sqrt{N_s}L_{\mathcal{S}}}{1-\gamma L_{\mathcal{S}}} \cdot \sup \left \{ \frac{\lVert sg_{t+1} - \eta \varphi(s_{t+1})\rVert_F}{\lVert sg_{t+1} - \eta \varphi(s_{t+1}) \rVert_2} \lVert\nabla_{s_{t+1}} sg_{t+1} - \eta\nabla_{s_{t+1}} \varphi (s_{t+1}) \rVert_F\right \}\\
\end{aligned}
\end{equation}
Because, for vectors, the 2-norm and the Frobenius norm are equivalent, we get:
\begin{equation}
\begin{aligned}
\Vert &\nabla_{s_t}Q(s_t,a_t) \Vert_F \\
&\leq \frac{\sqrt{N_s}L_{\mathcal{S}}}{1-\gamma L_{\mathcal{S}}} \cdot \sup \left \{\lVert\nabla_{s_{t+1}} sg_{t+1} - \eta\nabla_{s_{t+1}} \varphi (s_{t+1}) \rVert_F\right \}\\
\end{aligned}
\end{equation}
Next, we take the derivative of the function $\frac{L_{\mathcal{S}}}{1-\gamma L_{\mathcal{S}}}$ with respect to $L_{\mathcal{S}}$: $\nabla_{L_{\mathcal{S}}}\frac{L_{\mathcal{S}}}{1-\gamma L_{\mathcal{S}}} = \frac{1}{(1-\gamma L_{\mathcal{S}})^2} > 0$, which means that this function is an increasing function with $L_{\mathcal{S}} \leq 1$. Hence, we can obtain the claimed result:
\begin{equation}
\begin{aligned}
\Vert &\nabla_{s_t}Q(s_t,a_t) \Vert_F \\
&\leq \frac{\sqrt{N_s}}{1-\gamma} \cdot \sup \left \{\lVert\nabla_{s_{t+1}} sg_{t+1} - \eta\nabla_{s_{t+1}} \varphi (s_{t+1}) \rVert_F\right \}\\
&= \frac{\sqrt{N_s}}{1-\gamma} \cdot \sup \left \{\Vert \nabla_{s_t} \pi_{\theta^h} - \eta\nabla_{s_t} \varphi (s_t ) \Vert_F\right \}
\end{aligned}
\end{equation}
Which completes the proof.
}
\end{proof}

\section{\added{Comparison of the model-free Q-gradient penalty (MF-GP) and model-based Q-gradient penalty (MB-GP) on Large Ant Maze (U-shape) and \textit{Stochastic} Ant Maze}}
\label{appendix:comparison_gp}
\begin{figure*}[h]
\captionsetup[subfloat]{format=hang, justification=centering}
\centering
\subfloat[Large Ant Maze (U-shape)]{\includegraphics[width=0.43\textwidth]{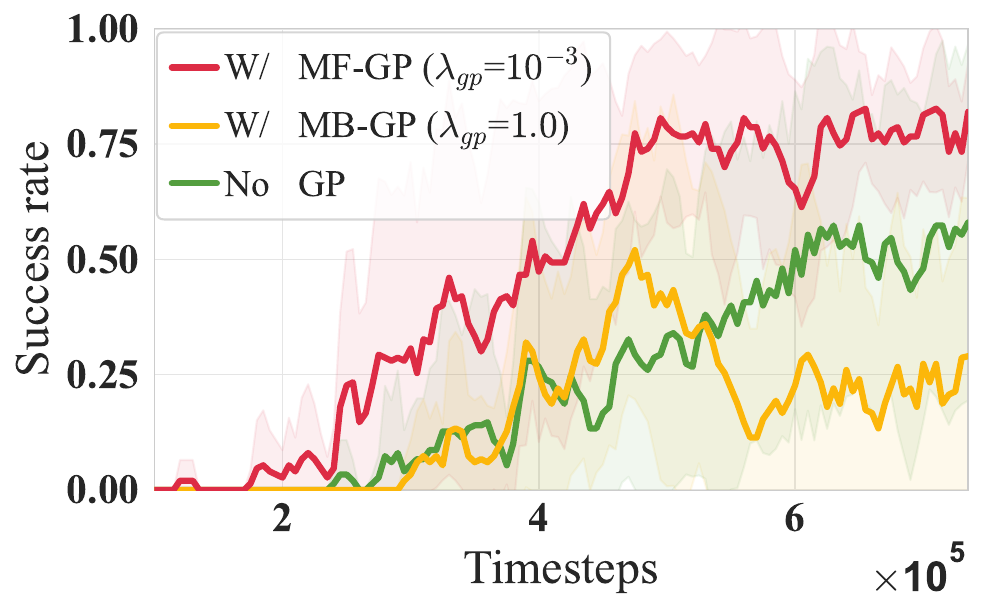}
\label{fig:appendix_comparison_gp1}} \hspace*{-0.7em}
\subfloat[\textit{Stochastic} Ant Maze]{\includegraphics[width=0.43\textwidth]{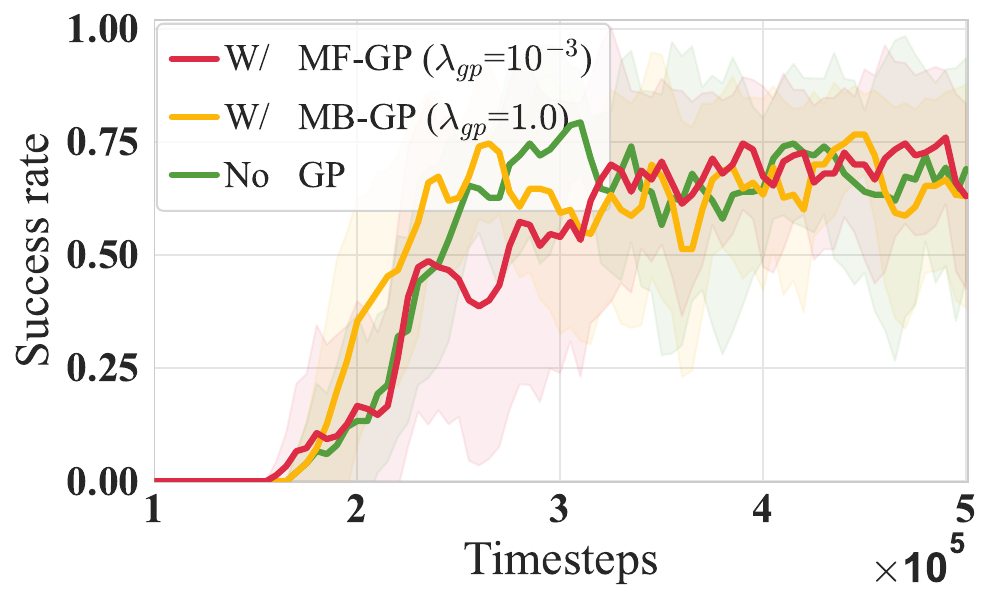}
\label{fig:appendix_comparison_gp2}}
\caption{\added{Comparison of the model-free Q-gradient penalty (MF-GP) and model-based Q-gradient penalty (MB-GP) on the Large Ant Maze (U-shape) and Stochastic Ant Maze scenarios.}}
\label{fig:appendix_comparison_gp}
\end{figure*}
\added{In prior work \cite{wang2024guided}, the authors of the model-based Q-gradient penalty (MB-GP) acknowledged that learning accurate transition dynamics models is challenging due to limited regression capability. Moreover, approximating environment dynamics to derive the upper bound is particularly costly and often impractical in complex dynamical environments. To address these limitations, we propose a model-free Q-gradient penalty (MF-GP) that eliminates the dependence on learned models. To better demonstrate the scalability and robustness of the proposed method, we further compare MB-GP and MF-GP in more extensive and challenging scenarios, including larger maze environments, such as the Large Ant Maze (U-shape) with a 24$\times$24 size, and stochastic settings, such as the Stochastic Ant Maze \cite{wang2024guided} with a 12$\times$12 size.}

\added{As shown in Fig.~\ref{fig:appendix_comparison_gp}, in the Large Ant Maze, our proposed method demonstrates significant advantages and robustness, due to its independence from learned dynamics models. In contrast, MB-GP initially benefits from local Lipschitz constraints; however, its performance progressively deteriorates as exploration deepens, owing to inaccuracies in the learned dynamics. In the Stochastic Ant Maze, MB-GP exhibits better robustness by employing a bootstrapped ensemble of dynamics models to approximate the stochastic transition dynamics of the environment. In comparison, our proposed MF-GP, which is derived under deterministic dynamics assumptions, shows reduced robustness in such stochastic settings. This provides valuable insight for our future work on enhancing the robustness of MF-GP in stochastic environments.}

\section{\added{Limitation of high-return (HR) sampling: the narrow bottleneck problem}}
\label{appendix:narrow_bottleneck}
\begin{figure*}[!h]
\captionsetup[subfloat]{format=hang, justification=centering}
\centering
\subfloat{\includegraphics[width=0.38\textwidth]{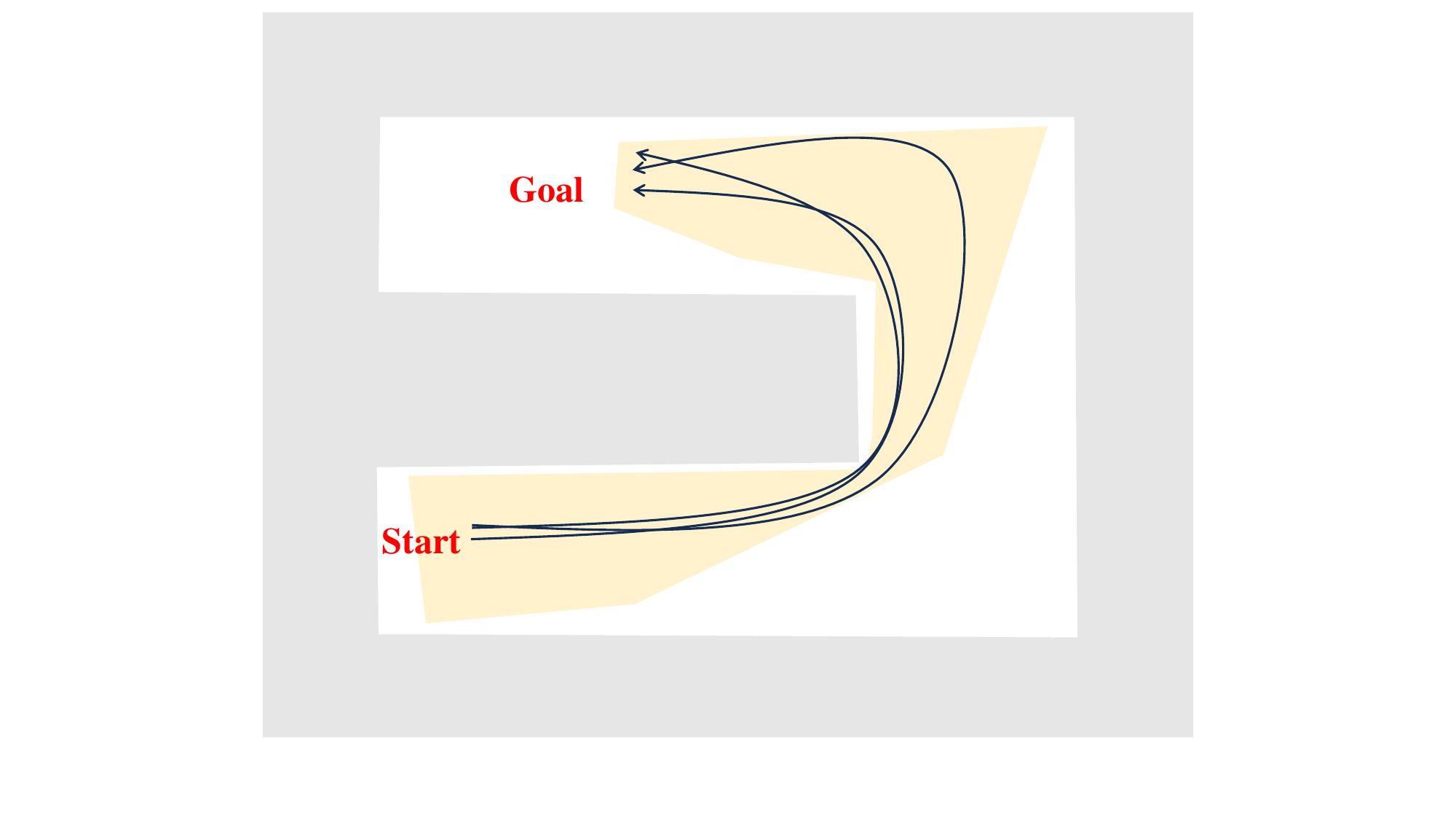}} \hspace*{0.2em}
\subfloat{\includegraphics[width=0.38\textwidth]{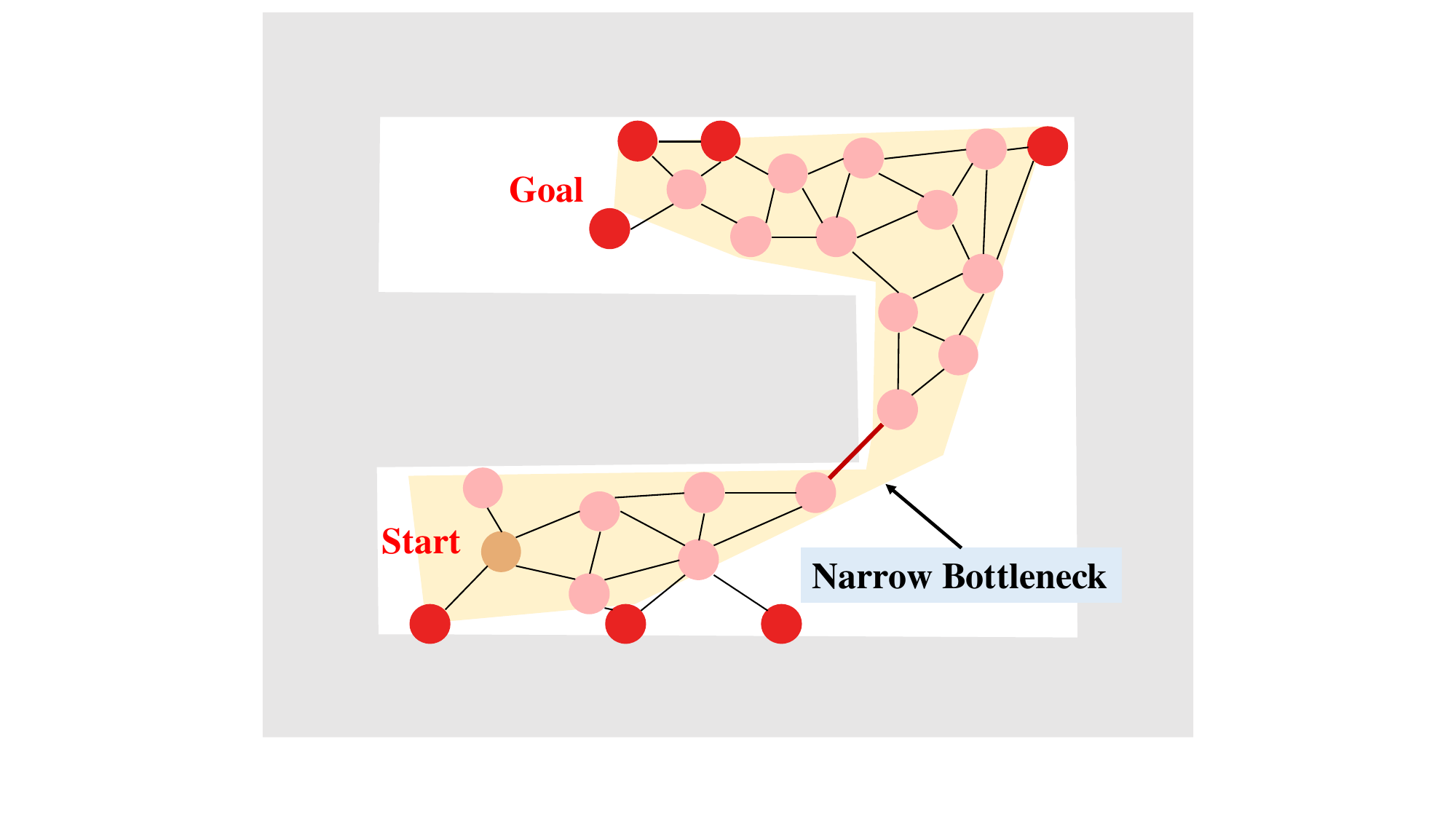}}
\caption{\added{An illustrative example of the narrow bottleneck problem induced by high-return (HR) sampling. The left panel shows the high-reward region (yellow), where a narrow bottleneck forms near the first corner. The right panel depicts that landmarks are sparse near the bottleneck, which affects the quality of the landmark-based graph.}}
\label{fig:narrow_bottleneck}
\end{figure*}
\added{As shown in Fig.~\ref{fig:compare_others}, performance degradation is observed during the later stage of training. Here, we provide additional analysis to investigate the underlying cause: the narrow bottleneck problem potentially induced by high-return (HR) sampling. Here, we visualize the HR sampling behavior at 0.7M steps in the Large Ant Maze environment, as illustrated in Fig.~\ref{fig:narrow_bottleneck}. The left panel highlights the region composed of high-reward trajectories in yellow, where a narrow bottleneck emerges near the first corner. The right panel presents the landmark-based graph within this high-reward region. Due to the use of farthest point sampling (FPS) for landmark selection, the sparsity of landmarks around the bottleneck may lead to undesirable behaviors, such as colliding with walls or the movable block in Ant Push, ultimately resulting in performance degradation. This issue can be mitigated by dynamically adjusting the temperature parameter $\alpha$, for instance, by increasing $\alpha$ during the later stages of training to better balance high-return sampling and uniform sampling.}

\bibliographystyle{plainnat}
\bibliography{neurips_2023}

\end{document}